\documentclass[letterpaper]{article} % DO NOT CHANGE THIS
\usepackage[preprint]{aaai2027}
\usepackage[hyphens]{url}  % DO NOT CHANGE THIS
\usepackage{graphicx} % DO NOT CHANGE THIS
\usepackage{natbib}  % DO NOT CHANGE THIS AND DO NOT ADD ANY OPTIONS TO IT
\usepackage{caption} % DO NOT CHANGE THIS AND DO NOT ADD ANY OPTIONS TO IT
\usepackage{algorithm}
\usepackage{algorithmic}
\usepackage{placeins}
\usepackage{booktabs}
\usepackage{amsmath}
\usepackage{amssymb}
\usepackage{amsthm}
\usepackage{tikz}
\usetikzlibrary{arrows.meta,positioning}

\newcommand{\Fcand}{F_{C}}
\newcommand{\Fref}{F_{R}}
\newcommand{\Phiadm}{\Phi_{\mathrm{adm}}}
\newcommand{\true}{\textsc{true}}
\newcommand{\false}{\textsc{false}}
\newcommand{\unk}{\textsc{unknown}}
\newcommand{\ur}{\textsc{unknown\_resource}}
\newcommand{\absent}{\textsc{absent}}
\newcommand{\uu}{\textsc{unknown\_unsupported}}

\newcommand{\fwithin}{\textsc{false\_within\_declared\_universe}}

\newtheorem{definition}{Definition}[section]
\newtheorem{proposition}[definition]{Proposition}

\newtheorem{corollary}[definition]{Corollary}

\newcommand{\system}{\textsc{ModelEquivBench}}

\title{ModelEquivBench: Certifying Multi-Relational Evaluation of \\LLM-Generated Optimization Models}
\author{Penglin Zhu, Jungang Xu\corresponding}
\affiliations{
School of Computer Science and Technology, University of the Chinese Academy of Sciences\\
zhupenglin21@mails.ucas.ac.cn, xujg@ucas.ac.cn
}

\begin{document}
\maketitle

\begin{abstract}
Large language models increasingly generate optimization models from natural
language, but existing evaluation often reduces a generated model and its
ground truth to a single \emph{equivalent}/\emph{not-equivalent} verdict or an
execution-success rate---labels that are neither independently checkable nor
faithful to the multiple distinct senses in which two formulations can agree.
We present \system{}, a certifying, multi-relational evaluation system that
reports a per-pair \emph{semantic profile} $E0$--$E6$: model construction and
exact ingestion ($E0$), verified representation alignment ($E1$), same-space
and projected feasible-set relations ($E2$, $E3$), objective-order equivalence
($E4$), optimal-value equality ($E5$), and optimizer-set equivalence ($E6$).
Each decided entry carries relation-appropriate, independently re-checkable
evidence: replayable traces or explicit maps for $E0$--$E1$, exact-rational
certificates for positive $E2$--$E6$ conclusions, and explicit witnesses for
supported negatives. Incomplete mapping search, unsupported structure, and
resource limits produce typed \textsc{unknown} or N/A outcomes rather than
guesses, while unmet prerequisites are reported as \textsc{absent}. Using
\system{} to evaluate three model snapshots---GPT-5.4, Claude Sonnet 4.6, and
Qwen3.5-397B-A17B---on the same frozen cohort of 173 base problems (346 cells
per model) under a no-repair protocol, the resulting profiles expose
distinctions that coarse baselines do not represent: $49$, $35$, and $25$
cells contain executable candidates that are nevertheless certified negative
on at least one supported relation, and $25$, $8$, and $18$ structural
rejections occur on pairs for which $E2$ certifies mapped feasible-set equality
under a verified map. The three model snapshots fail at different stages of
the profile and therefore cannot be meaningfully reduced to a single accuracy
score.
\end{abstract}
% --- begin inlined sections.tex ---
\section{Introduction}

Large language models (LLMs) are increasingly used to turn natural-language
problem descriptions into runnable optimization models
\citep{ramamonjison2023nl4opt,ahmaditeshnizi2024optimus,xiao2024chain,huang2025orlm}.
Assessing whether a generated model is \emph{correct}, however, remains
unsettled. Common signals include (i) \emph{execution success}---the code runs
and a solver returns a number---and (ii) a single
\emph{equivalent}/\emph{not-equivalent} verdict against a ground-truth model,
produced by value comparison or structural graph matching
\citep{wang2025orgeval,zhai2025equivamap}. Neither by itself answers all of the
semantic questions relevant to formulation correctness. Execution
success says nothing about whether the model \emph{means} the right thing: a
program can build, export, and solve a model that encodes the wrong feasible
region or objective. A single global equivalence label, in turn, conflates
several genuinely different questions---do the feasible sets coincide, do the
objectives induce the same ordering, are the optimal values equal, are the
optimizer sets in bijection?---and is typically returned without a proof that a
third party could independently re-check.

We take a different stance. Given an LLM-generated (\emph{candidate}) model and
a \emph{reference} model, we ask: \emph{in which distinct semantic senses do
they agree or disagree, and which of those conclusions can be independently
certified?} Our answer is \system{}, a certifying, multi-relational evaluation system
that reports a per-pair \emph{semantic profile} of seven dimensions,
$E0$--$E6$, summarized in Table~\ref{tab:defs}. The dimensions range from model
construction and exact ingestion ($E0$), through verified representation
alignment ($E1$) and feasible-set relations in the aligned space ($E2$) and
under an affine lift ($E3$), to objective order ($E4$), optimal value ($E5$),
and optimizer sets ($E6$). Crucially, $E0$--$E6$ form a \emph{profile}, not a
ladder: the indices order the dimensions' \emph{definedness} prerequisites,
not their logical strength, and equal feasible sets do not imply equal
objective order, value, or optimizers. There is no ``deepest passing level''
and no scalar collapse.

\system{} is \emph{certifying}: each decided entry carries
relation-appropriate evidence that an independent checker can re-verify. $E0$
uses a replayable execution/ingestion trace, $E1$ an explicit admissible map,
and positive $E2$--$E6$ conclusions exact-rational certificates (Farkas,
affine-lift, objective-identity, or primal--dual evidence). Supported negatives
carry a failing trace or explicit witness. Incomplete search, unsupported
structure, and resource limits yield typed \unk{} or N/A outcomes; failed
prerequisites make later dimensions \absent{}. The bidirectional
feasible-set containment engine we call \emph{Certifying Mapped-Containment}
(CMC) is the principal technical engine for $E2$ and the supported part of
$E3$; it is a component inside \system{}, not a replacement for the
$E0$--$E6$ profile.

We instantiate \system{} on the supported envelope of linear and
bounded-discrete models and evaluate three model snapshots---GPT-5.4, Claude
Sonnet 4.6, and Qwen3.5-397B-A17B---on a frozen cohort of $173$ base problems,
each under paired Structured and Unstructured conditions ($346$ cells per
model), with one generation per condition at temperature $0.0$ and \emph{no
repair or resampling}. Their profiles differ sharply. GPT-5.4 produces
 ingestible candidates for $334/346$ cells, compared with $156/346$ for Claude
Sonnet 4.6 and $196/346$ for Qwen3.5-397B-A17B. Conditional $E1$ coverage is
nevertheless similar: $277/334$ ($82.9\%$), $130/156$ ($83.3\%$), and
$164/196$ ($83.7\%$). Qwen3.5-397B-A17B additionally records $17$ provider/API
errors before candidate creation; these are reported as E0 \absent{}, not
model-quality failures. Exact certification reveals $49$, $35$, and $25$
execution-success overestimations and $25$, $8$, and $18$ structural
rejections despite $E2$-certified mapped feasible-set equality. Our contributions are: (1) \system{},
the $E0$--$E6$ certifying multi-relational evaluator and its typed-abstention
discipline; (2) a certifying implementation with independent re-verification
of every decided fact; and (3) a three-model formal study showing distinctions
that execution-only, value-only, and structural baselines do not represent.
The contribution is an evaluator and certification system, not a new benchmark
dataset: Bench4Opt supplies the experimental instances. Supplement Sections
A--E give the full definitions, certificate soundness proofs, boundary cases,
and explicit non-implication examples.
% --- begin inlined figures/pipeline_float.tex ---
\begin{figure}[t]
\centering
\includegraphics[width=\columnwidth]{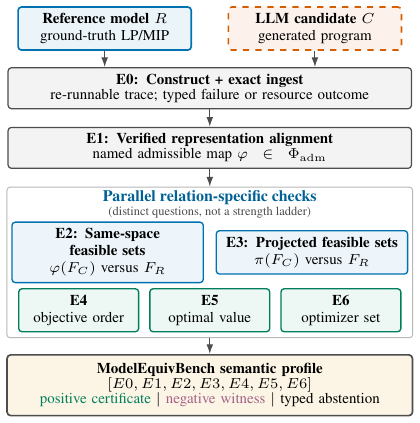}
\caption{The \system{} certifying $E0$--$E6$ workflow. $E0$ and $E1$ establish
prerequisites; $E2$--$E6$ ask parallel, relation-specific questions. Decided
facts carry replayable certificates or witnesses, while unresolved entries
remain typed outcomes rather than a global verdict.}
\label{fig:pipeline}
\end{figure}

% --- end inlined figures/pipeline_float.tex ---
\section{Related Work}

\subsection{LLM Optimization Modeling and Evaluation}

\paragraph{LLM optimization modeling.} A growing line of work prompts or
fine-tunes LLMs to formulate optimization models from text, including the
NL4Opt competition \citep{ramamonjison2023nl4opt}, agentic decomposition
systems \citep{ahmaditeshnizi2024optimus,xiao2024chain}, and trained modelers
\citep{huang2025orlm,lu2025optmath}. These works focus on \emph{generation};
we focus on \emph{certifying evaluation} of generated models against a
reference.

\paragraph{Evaluating generated mathematical programs.} Evaluation has largely
relied on execution success and optimal-value matching, or on a single
equivalence verdict. EquivaMap uses an LLM to propose mappings between decision-variable
spaces and then verifies feasibility and optimality preservation
\citep{zhai2025equivamap}; EquiBench studies LLMs' program-equivalence
reasoning \citep{wei2025equibench}. These approaches target an overall
equivalence judgment, whereas \system{} reports separately certified relations and
typed abstentions.

\subsection{Structural Comparison and Certifying Verification}

\paragraph{Structural model comparison.} ORGEval compares optimization models
by graph-theoretic canonicalization \citep{wang2025orgeval}. Structural
matching is efficient but can reject pairs whose feasible sets coincide after
a verified change of representation; our $E2$ relation certifies exactly this
same-space, mapped feasible-set equality without claiming full-profile
equivalence.

\paragraph{Proof certificates and exact verification.} Certifying algorithms
attach independently checkable evidence to their answers, from proof-carrying
code \citep{necula1997proof} to certified integer-programming reasoning
\citep{cheung2017verifying,hoen2024certifying,doornmalen2023proof,bogaerts2022certified}.
Feasible-set containment rests on Farkas' lemma and polyhedral theory
\citep{schrijver1986theory}; projected polyhedra and extended formulations are
studied by \citet{yannakakis1991expressing}, \citet{conforti2013extended},
\citet{kellner2015containment}, and \citet{liberti2009reformulations}. Exact
rational LP/MIP methods \citep{applegate2007exact,cook2013hybrid} make the
resulting evidence checkable without additional floating-point error during
verification. We adapt these tools into \system{}, in which every decided fact is
independently re-verified using exact arithmetic.

\section{The ModelEquivBench Framework}

\subsection{Models, Evidence Policy, and Outcome States}

\paragraph{Models.} Let $M$ have $n_M$ decision variables collected in
$x=(x_1,\ldots,x_{n_M})^\top$, with per-coordinate domains
$D_{M,i}$ (continuous, integer, or binary) and
$D_M=\prod_i D_{M,i}\subseteq\mathbb{R}^{n_M}$. Its feasible set is
$F_M=\{x\in D_M:A_{\mathrm{ub}}x\le b_{\mathrm{ub}},\,
A_{\mathrm{eq}}x=b_{\mathrm{eq}}\}$; it has linear objective
$f_M(x)=\mathrm{obj}_M^\top x$, sense in $\{\min,\max\}$, optimal value
$\mathrm{opt}(M)$ in the extended reals, and optimizer set
$\mathrm{argopt}(M)$. Empty-set conventions apply, and all coefficients are
stored as exact rationals. We write $R,C$ for reference and candidate, with
ingested feasible sets $\Fref,\Fcand$ (bounds folded into inequalities).
\system{} derives all claims from the models, never from declared labels.

\paragraph{Global disciplines.} Four rules govern every dimension:
(i) positive facts carry replayable traces, verified maps, or exact-rational
certificates as appropriate; (ii) supported negatives carry failing traces or
re-checkable witnesses; (iii) every fact from $E2$ onward is tied to a named,
verified map $\varphi$; and (iv) the evaluator is fail-closed---missing
prerequisites, unsupported classes, extraction failures, and resource limits
produce typed abstentions, never guessed \false{} outcomes. Unmet prerequisites
are recorded separately as \absent{}.
% --- begin inlined tables/e0e6_defs.tex ---
% Table 1: E0-E6 dimensions, evidence types, and abstention conditions.
\begin{table*}[t]
\centering
\small
\setlength{\tabcolsep}{4pt}
\renewcommand{\arraystretch}{1.15}
\begin{tabular}{@{}p{0.055\textwidth}p{0.30\textwidth}p{0.30\textwidth}p{0.27\textwidth}@{}}
\toprule
\textbf{Dim.} & \textbf{Relation compared} & \textbf{Positive evidence / negative witness} & \textbf{Typed abstention} \\
\midrule
E0 & Candidate code builds a structurally valid exact-rational model that ingests & Re-runnable parse/build trace; FALSE carries the failing trace & \textsc{unknown\_resource} (exec.\ cap); \textsc{unknown\_unsupported} (e.g.\ quadratic) \\
E1 & A verified admissible alignment $\varphi\in\Phi_{\mathrm{adm}}$ exists (type/domain-compatible) & Explicit $\varphi$ (permutation, complement, sign, or affine lift), re-checked exactly & \textsc{unknown}: no map found in the searched family (coverage-limited, never \textsc{false}) \\
E2 & Same-space feasible-set relation $\varphi(F_C)$ vs.\ $F_R$ (equal / strict rel.\ / strict restr.\ / incomparable) & Two-way Farkas containment multipliers (equal); separating witness (\textsc{false\_within}) & \textsc{unknown} if no certified-equal and universe not exhaustive; \textsc{unknown\_resource} \\
E3 & Projected relation of a lifted candidate: $\pi(F_C)$ vs.\ $F_R$ under section $\sigma$ & Farkas obligations for $\pi(F_C)\!\subseteq\! F_R$, $\sigma(F_R)\!\subseteq\! F_C$, $\pi\!\circ\!\sigma\!=\!\mathrm{id}$ & \textsc{unknown} outside the affine-lift schema; \textsc{n/a} if no auxiliary vars \\
E4 & Objective-order equivalence under $\varphi$ & Identity/lift/complement objective identity; FALSE: two oppositely ranked points & \textsc{unknown}; \textsc{n/a} \\
E5 & Optimal-value equality $\mathrm{opt}(R)=\mathrm{opt}(C)$ & Two LP weak-duality sandwiches (LPs only; not used for integer models); implications from E2/E3 & \textsc{unknown}; \textsc{unknown\_resource}; \textsc{n/a} \\
E6 & Optimizer-set bijection $\mathrm{argopt}(R)\!\leftrightarrow\!\mathrm{argopt}(C)$ & Identity or 0--1 complement involution certificate (positive-only) & \textsc{unknown} (no certified-negative type); \textsc{n/a} \\
\bottomrule
\end{tabular}
\caption{The E0--E6 semantic profile. Each dimension is a distinct relation with its own certificate/witness type and typed abstention vocabulary. The dimensions are prerequisites for one another's \emph{definedness}, not rungs of a monotone equivalence ladder; a pair receives a profile, not a single label. N/A means a relation is mathematically inapplicable, whereas \textsc{absent} means that an upstream prerequisite was not met. \textsc{ur}=\textsc{unknown\_resource}; \textsc{false\_within}=\textsc{false\_within\_declared\_universe}.}
\label{tab:defs}
\end{table*}

% --- end inlined tables/e0e6_defs.tex ---
\paragraph{Outcome states and definedness.} A profile entry is not merely
\true{} or \false{}. A dimension may also be \unk{} because the supported
procedure established neither a positive conclusion nor a supported negative
conclusion, \ur{} because its resource budget
was exhausted, \textsc{unknown\_unsupported} because the model lies outside the
implemented envelope, or N/A because the relation is mathematically
inapplicable. We reserve \absent{} for a different situation: an upstream
prerequisite was not established, so the later dimension was never evaluated.
For example, if no candidate model ingests, $E4$ is \absent{}, not N/A; if a
same-space candidate has no auxiliary variables, $E3$ is N/A. This distinction
matters both logically and statistically. A rate such as ``$E2$ decided given
$E1$'' conditions on the dimensions that were actually entered, whereas a
cohort-wide count retains \absent{} cells in the accounting without treating
them as failures of the relation. The resulting profile is therefore a partial,
typed set of claims rather than a seven-bit vector.

\subsection{Construction and Representation Alignment}

\paragraph{$E0$: construction and exact ingestion.} $E0$ asks whether the
candidate code executes and yields a structurally valid exact-rational model
that ingests. It is \true{} with a re-runnable build trace, \false{} with the
failing parse/build/execution trace, \ur{} when the execution cap is exceeded,
and \textsc{unknown\_unsupported} for a valid but out-of-envelope construct
(e.g.\ a quadratic objective). $E0$ does not require a solver result:
infeasible and unbounded models may still be $E0$ \true{}. $E0$ is established
from ingestion and \emph{checkpointed before} any later work, so a downstream
timeout cannot corrupt it.

\paragraph{$E1$: verified representation alignment.} Because a candidate may
name and order variables differently, $E1$ searches for an admissible map from
candidate to reference variables. The admissible family is
$\Phiadm=\Phi_{\mathrm{same}}\cup\Phi_{\mathrm{lift}}$, where
$\Phi_{\mathrm{same}}$ contains type-compatible coordinate permutations
composed with per-coordinate transforms---identity, binary complement
$x\mapsto 1-x$ (0--1 only), and sign negation $x\mapsto -x$ (free continuous
only)---and $\Phi_{\mathrm{lift}}$ contains affine projection/section pairs
with injective sections, relating a lower-dimensional reference to a
higher-dimensional candidate. Candidate maps are proposed by a deterministic
grammar (exact and original name matching, type-compatible permutation,
signature matching, sign, complement, and affine lift) and each is checked for
admissibility (dimension match, permutation bijectivity, per-transform legality
by variable type, section injectivity, and the exact right-inverse identity
$\pi\circ\sigma=\mathrm{id}$ for affine lifts). $E1$ is \true{} if at least one
admissible $\varphi$ is verified and \unk{} if none is found in the searched
family. Critically, $E1$ is \emph{never} \false{}: absence of a found map is
not proof that no map exists, so the outcome is coverage-limited, not a
refutation.

\subsection{Feasible-Set Relations}

\paragraph{$E2$: same-space feasible-set relation.} Fixing a verified same-space
$\varphi$, $E2$ compares the mapped candidate feasible set $\varphi(\Fcand)$
with $\Fref$ \emph{as subsets of the same semantic variable space} and
classifies the pair as \emph{equal} ($\Fref=\varphi(\Fcand)$), \emph{strict
relaxation} ($\Fref\subsetneq\varphi(\Fcand)$), \emph{strict restriction}
($\varphi(\Fcand)\subsetneq\Fref$), \emph{incomparable}, or \emph{unknown}.
Each containment direction $P\subseteq Q$ is certified by exact nonnegative
Farkas multipliers $(\mu,\lambda)$ with $a^\top=\mu^\top G+\lambda^\top H$ and
$\beta\ge\mu^\top h+\lambda^\top h_{\mathrm{eq}}$ for every row $(a,\beta)$ of
$Q$; a failure to contain is witnessed by a feasible point of $P$ violating a
specific row of $Q$. This bidirectional, certificate-or-witness containment is
the CMC engine; multipliers and witnesses are round-tripped through an
independent verifier. Per-map relations are aggregated across the declared map
universe: $E2$ is \true{} if \emph{some} admissible $\varphi$ is certified
equal (and re-verified); it is
\textsc{false\_within\_declared\_universe} only if that universe is
\emph{exhaustive} and \emph{every} map in it separates with a re-verified
witness (no equal, no unknown); otherwise it is \unk{}. The universe is
declared exhaustive only when the type-compatible permutation family is fully
enumerable ($n\le 7$) and no binary coordinate is present, since binary
complements are not exhaustively enumerated. Thus
\textsc{false\_within\_declared\_universe} is a completeness-relative
negative---``no admissible representation in the complete declared universe
makes the feasible sets equal''---not a failed search.

\paragraph{Per-map and aggregated claims.} The distinction between a relation
under one named map and the aggregate $E2$ outcome is essential. A certified
strict restriction under a particular $\varphi$ does not rule out a second map
that makes the sets equal. Hence a positive aggregate equality is existential,
whereas a negative aggregate statement requires a complete finite universe and
a re-verified separation for every member. When the universe is incomplete,
the strongest sound conclusion after testing many non-equal maps is still
\unk{}. This asymmetric policy deliberately sacrifices recall to prevent a
failed representation search from being mislabeled as semantic inequality.
Certificates and witnesses retain the map identifier, the exact transformed
constraints, and the verified obligation, so every aggregate claim can be
decomposed into independently checkable per-map facts.

\paragraph{$E3$: projected feasible-set relation.} When the candidate carries
auxiliary variables, $E3$ compares $\Fref$ with the \emph{projection} of the
higher-dimensional $\Fcand$. With an affine projection $\pi:\mathbb{R}^{n_C}\to
\mathbb{R}^{n_R}$ and an affine section $\sigma(x)=Cx+d$, the checker certifies
three obligations---$\pi(\Fcand)\subseteq\Fref$, $\sigma(\Fref)\subseteq\Fcand$,
and $\pi\circ\sigma=\mathrm{id}$---each via verifier-accepted Farkas
multipliers or exact affine identities. Together these certify
$\pi(\Fcand)=\Fref$: the reference set is exactly the affine projection of the
candidate's feasible set. Projections outside the supported affine-lift schema
(integer-auxiliary elimination, non-affine projection, reference equalities)
yield \unk{}, and same-space cases with no auxiliary variables are N/A. If a prerequisite such
as $E0$ or $E1$ is missing, $E3$ is \absent{} rather than N/A. We stress that $E3$ is \emph{not} a level above $E2$: $E2$ compares aligned sets in
one space, while $E3$ bridges spaces of different dimension; they are different
relations, reported independently.

\subsection{Objective and Solution Relations}

\paragraph{$E4$--$E6$: objective and optimizer relations.} Under a verified
candidate-to-reference map $\varphi$ ($\varphi=\pi$ for lifts), $E4$ checks
whether $f_C$ and $f_R\circ\varphi$ induce the same ordering on candidate
feasible points with consistent optimization sense. Positives use an exact
identity in the identity/lift/complement schema (additive constants cancel in
comparisons); negatives use two oppositely ranked feasible points. $E5$ certifies \emph{optimal-value
equality} $\mathrm{opt}(R)=\mathrm{opt}(C)$ by two LP weak-duality sandwiches of
equal value or by implications from $E2$/$E3$ equality with a preserved
objective; duality certificates are \emph{refused for integer variables}, since
a relaxation bound does not certify the integer optimum. $E6$ certifies an
\emph{optimizer-set bijection} $\mathrm{argopt}(R)\leftrightarrow
\mathrm{argopt}(C)$ for a map with a certified inverse on optimizers (identity
under $E2$-equality with identical objective, or a 0--1 complement involution).
$E6$ is \emph{positive-only} in the current envelope: no certified-negative
type exists yet, so absence of a correspondence is \unk{}, never \false{}; and
$E3$-equality does not yield $E6$ because the affine section is only a right
inverse. These non-implications ($E2\not\Rightarrow E4$, $E4\not\Rightarrow
E5$, $E5\not\Rightarrow E6$, and their converses) are exactly why the framework
reports a profile rather than a level.

\paragraph{Why the relations cannot be collapsed.} The profile separates
logically independent phenomena. Two formulations can have the same feasible
set but opposite objective senses, making $E2$ true and $E4$ false. Distinct
feasible sets can share one optimum value, so $E5$ does not imply $E2$.
Likewise, equal optimal values do not identify the same optimizers, and an
optimizer correspondence says nothing about non-optimal feasible points. Even
when one relation entails another under additional premises, the evaluator
records the premises and the resulting certificate rather than silently
propagating an unqualified global label. This is also why a baseline that
checks only execution, value, or structure cannot be treated as an oracle for
the whole profile.

\section{Certifying Implementation}

\subsection{Exact Ingestion and Certificate Generation}

\system{} orchestrates a verification core;
synthesis is never trusted, and every artifact is re-verified before a decided
fact is reported. Candidate code is extracted deterministically and executed in
isolation under a $30$\,s cap; it must export an LP or MPS file, which is
ingested into an exact-rational internal model. Every serialized decimal token
is converted directly to a rational, so certification is exact with respect to
the LP/MPS coefficients and introduces no additional floating-point error. $E1$ map proposals come from
the deterministic grammar above. Feasible-set containment uses an exact-rational
LP oracle to search for Farkas multipliers (positive direction) or a separating
feasible point (negative direction); bounded 0--1 systems are handled by exact
enumeration. Affine-lift and primal--dual optimality certificates are
synthesized similarly. Every synthesized object is then re-checked by an
independent verifier---two-way containment for $E2$ equality, witness
verification for separations, the affine-lift obligations for $E3$, and the
weak-duality re-check for $E5$---and only re-verified facts are reported;
all emitted certificate and witness artifacts were independently re-verified
in all three runs (Table~\ref{tab:results}); the reported denominator counts
artifacts rather than unique cells. Evaluation is \emph{dimension-level checkpointed}, so
a per-dimension timeout yields a typed \ur{} for exactly that dimension while
preserving earlier results. The protocol enforces \emph{no repair and no
resampling}: a malformed or code-less response is a genuine $E0$ outcome, not an
occasion to re-prompt.

Gurobi plays a strictly bounded role: it builds and exports the generated
models and reads LP/MPS artifacts, and it backs the value-matching and
structural (ORGEval-style) baselines used only for comparison. It does
\emph{not} certify any $E1$--$E6$ conclusion; all framework certificates use
the exact-rational oracle and the independent verifier. Handling of transport timeouts and character-decoding failures in the harness
provides engineering robustness and is not part of the evaluation semantics.

\subsection{Independent Verification and Replay}

\paragraph{Independent checking boundary.} Certificate construction and
certificate acceptance are separate code paths. The constructor may use an
exact LP oracle to search for multipliers or witnesses, but the verifier only
receives the serialized model, map, and proposed evidence and recomputes the
required rational identities and inequalities. It does not trust solver status,
LLM text, cached verdicts, or floating-point tolerances. A failed check prevents the proposed fact from being reported as decided
rather than downgrading it to an uncertified Boolean. This separation is the operational basis for our use of
``certified'': a third party can replay the compact evidence without replaying
the generation process or accepting the constructor's control flow.

\paragraph{System interface and replay artifacts.} \system{} takes a reference
model and one generated candidate as its unit of evaluation and emits three
linked products: the typed $E0$--$E6$ profile, the evidence objects supporting
each decided entry, and a provenance manifest recording the map identifier,
resource outcome, and source artifact hashes. This interface separates model
generation from semantic assessment: a new generator can be evaluated without
changing the certifier, while an updated certifier can replay persisted
candidates without another paid model call. Dimension-level checkpoints also
make partial profiles stable under interruption---a timeout in $E5$, for
example, does not erase a previously verified $E2$ certificate. The resulting
artifact is therefore more informative than a final boolean: it states what was
proved, under which representation, what remained unresolved, and which compact
objects a third party must check to reproduce the claim. Supplement Section F
provides construction/replay pseudocode and the evidence schema.

\section{Experiments}

\subsection{Experimental Setup}\label{sec:experimental-setup}

\paragraph{Cohort.} Bench4Opt contains $394$ source records
\citep{wang2025orgeval}; in our paired representation these form $197$ base
problems. A
ground-truth--only support audit---depending solely on properties of the
reference artifacts, never on any LLM output---partitions them into a
development pilot set ($20$), a formal primary cohort ($173$), an unsupported
audit set ($4$, e.g.\ nonlinear structure outside the envelope), and a
malformed-or-missing set ($0$). We report results \emph{only} on the frozen
$173$-problem formal cohort (SHA-256 prefix \texttt{e9ad2018}). This
reference-only partition was frozen before formal generation, so no LLM output
or evaluator result influenced inclusion. The $20$-problem pilot was used only
for development and is excluded from formal results. Each base problem is presented under two paired natural-language
conditions---\emph{Structured} (legacy internal identifier \texttt{full}) and
\emph{Unstructured} (\texttt{concise})---giving $173\times 2=346$ cells per
model. The two conditions are paired by base problem. Besides condition-specific
rates, we derive base-level discordant counts, exact McNemar tests, and paired
bootstrap intervals as descriptive analyses of this frozen cohort; they are not
population-level claims.

\paragraph{Models and protocol.} We evaluate three snapshots served through
the AutoDL OpenAI-compatible endpoint: \texttt{gpt-5.4},
\texttt{claude-sonnet-4-6}, and \texttt{Qwen3.5-397B-A17B}. For each cell we take \emph{one} generation at temperature
$0.0$ with \emph{no repair} and \emph{no resampling}; the candidate is executed
under a $30$\,s cap and the $E1$--$E6$ evaluator under a uniform $120$\,s cap.
Prompts, cohort, mapping family, certificate rules, and baseline semantics are
identical across all three models. Every run uses the same frozen prompt
contract requesting one self-contained program that exports
\texttt{candidate.lp}. Provider/API errors before candidate creation are kept
separate from genuine E0 failure.

\paragraph{Baselines and reporting.} Alongside the \system{} profile, we run two comparison baselines: a
solver-based \emph{value-matching} implementation and the ORGEval-style
structural implementation used in our harness. We do not claim that the latter
is the unmodified official ORGEval implementation. Every rate names its denominator, and
\unk{}, \ur{}, \textsc{unknown\_unsupported}, N/A, \absent{},
provider/API failure, and certified negatives are kept distinct throughout;
\unk{} never means ``incorrect,'' N/A means the relation is mathematically
inapplicable, and \absent{} means a prerequisite was not met. Supplement
Sections G--J provide the complete cohort/protocol record, full outcome tables,
paired statistics, failure taxonomy, and replay instructions.

\paragraph{Outcome accounting.} We report both cohort-wide counts and
conditional coverage. Cohort-wide rows retain all $346$ cells and expose where
upstream generation or ingestion prevents later evaluation. Conditional rows
ask, among cells for which a dimension is defined, how often the evaluator
decides it: $E1$ coverage is conditioned on $E0$-\true{}, and the main $E2$
decided rate on $E1$-\true{}. The paired analysis instead uses all $173$ base
problems per model and defines binary end-to-end indicators for E0 success, E1
map-found, and E2/E5/E6 decided. Typed unknowns, N/A, and absence remain visible
in the source matrix and are never relabeled as certified \false{}.

Table~\ref{tab:results} reports the formal profiles, while
Table~\ref{tab:diag} summarizes relation-specific baseline discrepancies.
% --- begin inlined tables/diagnostics.tex ---
% Table 2: three-model baseline diagnostics.
\begin{center}
\begin{minipage}{\columnwidth}
\centering
\scriptsize
\setlength{\tabcolsep}{3.0pt}
\renewcommand{\arraystretch}{1.06}
\begin{tabular}{@{}lrrr@{}}
\toprule
Diagnostic & GPT-5.4 & Sonnet 4.6 & Qwen3.5 \\
\midrule
Execution-success overestimation & 49 & 35 & 25 \\
ORGEval rejection despite E2 equality & 25 & 8 & 18 \\
Strict value-match false acceptance & 0 & 0 & 0 \\
\bottomrule
\end{tabular}
\captionof{table}{Relation-specific baseline diagnostics. An execution-success overestimation is an E0-\textsc{true} cell with a certified negative on at least one supported relation. The second row counts structural rejections on pairs for which $E2$ certifies mapped feasible-set equality; it is an $E2$-specific disagreement, not necessarily a full-profile false rejection. The ORGEval implementation is the one used in our harness.}
\label{tab:diag}
\end{minipage}
\end{center}

% --- end inlined tables/diagnostics.tex ---% --- begin inlined tables/results_main.tex ---
% Table 3: three-model formal result summary.
\begin{table*}[t]
\centering
\scriptsize
\setlength{\tabcolsep}{3.2pt}
\renewcommand{\arraystretch}{1.08}
\begin{tabular}{@{}p{0.39\textwidth}ccc@{}}
\toprule
\textbf{Dimension / metric (denominator)} & \textbf{GPT-5.4} & \textbf{Sonnet 4.6} & \textbf{Qwen3.5} \\
\midrule
\multicolumn{4}{@{}l}{\emph{173 base problems $\times$ \{Structured, Unstructured\} $=346$ cells per model; identical SHA-256 cohort-hash prefix \texttt{e9ad2018}}}\\
\addlinespace[2pt]
E0 \textsc{true} (/346) & 334 (96.5\%) & 156 (45.1\%) & 196 (56.6\%) \\
\quad genuine E0 \textsc{false} / unsupported / provider error & 12 / 0 / 0 & 190 / 0 / 0 & 131 / 2 / 17 \\
E1 map found (/E0-\textsc{true}) & 277/334 (82.9\%) & 130/156 (83.3\%) & 164/196 (83.7\%) \\
\quad E1 \textsc{unknown} & 57 & 26 & 32 \\
E2 decided (/E1-\textsc{true}) & 114/277 (41.2\%) & 45/130 (34.6\%) & 87/164 (53.0\%) \\
\quad E2 \textsc{true} / \textsc{false\_within} & 108 / 6 & 41 / 4 & 84 / 3 \\
\quad E2 \textsc{unknown} / \textsc{unknown\_resource} & 118 / 45 & 61 / 24 & 50 / 27 \\
E3 decided & 0 & 0 & 0 \\
E4 decided & 139 & 40 & 91 \\
E5 \textsc{true} / \textsc{false} & 107 / 42 & 36 / 31 & 79 / 22 \\
E6 \textsc{true} (positive-only) & 70 & 16 & 58 \\
\addlinespace[2pt]
Certificate/witness artifacts re-verified & 358/358 & 123/123 & 250/250 \\
120\,s cap-hit cells (union over dimensions) & 70 & 35 & 39 \\
\bottomrule
\end{tabular}
\caption{Formal \system{} profiles under the same single-generation, temperature-$0.0$, no-repair protocol. Qwen3.5-397B-A17B's $17$ provider/API errors occurred before candidate creation and are reported separately from genuine E0 failure. \textsc{false\_within} abbreviates \textsc{false\_within\_declared\_universe}. Re-verification denominators count emitted dimension-verdict artifacts, not unique cells.}
\label{tab:results}
\end{table*}

% --- end inlined tables/results_main.tex ---
\subsection{Stage-Wise Model Profiles}

\paragraph{Execution does not settle semantics.} GPT, Sonnet, and Qwen produce
ingestible candidates for $334/346$, $156/346$, and $196/346$ cells. Yet their E2 decided counts
are only $114/277$, $45/130$, and $87/164$ among E1-\true{} cells. Across the
three runs, $49$, $35$, and $25$ E0-\true{} cells carry a certified negative
on at least one supported relation. An execution-only evaluator would count
every one of these as successful.

\paragraph{The evaluated snapshots fail at different stages.} GPT has high E0
coverage ($96.5\%$), so most losses occur downstream. Sonnet has the lowest E0
coverage ($45.1\%$) under the frozen output contract. Qwen lies between them:
$196$ cells are E0-\true{}, while among $329$ successful API responses $131$
are genuine E0 failures and $2$ are unsupported; another $17$ cells end in a
provider/API error before E0 is defined. Conditional E1 coverage is almost
identical across the three models ($82.9\%$, $83.3\%$, and $83.7\%$), showing
why generation, ingestion, and semantic coverage must be reported separately.

\paragraph{Conditional coverage changes the comparison.} Among cells with a
verified map, Qwen reaches an E2 decision on $87/164$ ($53.0\%$), compared with
$114/277$ ($41.2\%$) for GPT and $45/130$ ($34.6\%$) for Sonnet. This does
not define a global winner: the denominators exclude different upstream losses.
Rather, it shows how a model with lower end-to-end ingestion can have higher
conditional certifier coverage, a distinction hidden by one aggregate score.

\subsection{Baseline Discrepancies and Certified Case}

\paragraph{Exact certification exposes baseline blind spots.} ORGEval returns
\texttt{not\_equivalent} on $25$, $8$, and $18$ cells for which E2 certifies
mapped feasible-set equality for GPT, Sonnet, and Qwen, respectively. These are relation-specific discrepancies:
the structural baseline rejects the pair globally, whereas $E2$ certifies
equality of the mapped feasible sets. This comparison does not by itself assert
objective-order, optimal-value, or optimizer-set equivalence. No batch exhibits a strict value-match false acceptance, but
value matching addresses only one scalar relation and supplies no evidence for
feasible-set, objective-order, or optimizer-set claims.
% --- begin inlined figures/certified_case_float.tex ---
\begin{figure}[t]
\centering
\includegraphics[width=\columnwidth]{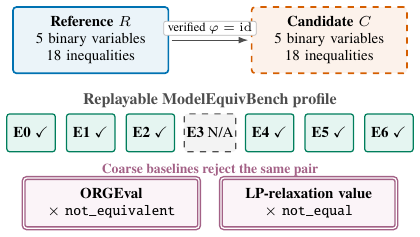}
\caption{A replayable \system{} all-applicable-positive case
(\texttt{B4O\_BASE\_0031}, Unstructured, Claude Sonnet 4.6). A verified
identity map and exact enumeration certify E2; E4--E6 independently re-verify
as true and E3 is N/A. Both coarse baselines nevertheless reject the pair.}
\label{fig:certified-case}
\end{figure}

% --- end inlined figures/certified_case_float.tex ---
\paragraph{A replayable all-applicable-positive case.}
Figure~\ref{fig:certified-case} shows a Sonnet case,
\texttt{B4O\_BASE\_0031} (Unstructured). An identity map aligns two models,
each with five binary variables and 18 inequalities; exact enumeration certifies
both $E2$ directions, and $E4$--$E6$ independently re-verify \true{} ($E3$ is
N/A). ORGEval returns \texttt{not\_equivalent}, while the LP-relaxation value
baseline returns \texttt{not\_equal}; the latter does not contradict $E5$,
which concerns the original binary optima. The paired Structured cell has the
same applicable profile, but ORGEval returns \texttt{equivalent}, illustrating
representation sensitivity. Persisted paths and SHA-256 hashes make the case
replayable.

\subsection{Coverage, Abstention, and Reporting}

\paragraph{Coverage and abstention are results, not errors.} The three runs
contain $57$, $26$, and $32$ E1-\unk{} cells; their E2 resource-timeout counts
are $45$, $24$, and $27$. At least one evaluator dimension hits the $120$\,s
cap in $70$, $35$, and $39$ cells. No formal cell receives a decided $E3$ outcome, so the
study empirically exercises E0--E2 and E4--E6 while E3 records a zero-coverage
boundary of the current affine-lift schema. The complement of a decided rate
therefore mixes search incompleteness, unsupported structure, and resource
abstention rather than forming a model error rate.

\paragraph{Auxiliary paired analysis.} For GPT and Sonnet, base-level Structured--Unstructured comparisons show descriptive
positive differences for downstream E1/E2 coverage, but no exact McNemar test
survives Holm correction. We retain this as an auxiliary observation rather
than a three-model ranking; all primary comparisons use the common aggregate
denominators in Tables~\ref{tab:diag} and~\ref{tab:results}.

\paragraph{Soundness and coverage are separate axes.} Every decided relation is
backed by evidence that can be replayed independently, while coverage reports
how often the current map family, supported schema, and resource budget reach a
decision. High coverage without evidence risks confident but ungrounded labels;
sound certification with limited coverage is transparent but incomplete. The
profile reports both, preserving certified negatives, unresolved searches,
resource limits, N/A, and upstream absence rather than forcing them into one
score.

\paragraph{Implications for evaluator design.} \system{} suggests that evaluation
reports should separate at least three quantities that are often conflated:
end-to-end generation success, conditional semantic coverage after ingestion,
and the distribution of certified positive, certified negative, and abstaining
outcomes within each relation. The same generator can look strong on the first
quantity and weak on the second, or vice versa, as the three snapshots illustrate.
A scalar ``equivalence accuracy'' cannot reveal whether errors arise from output
contract violations, representation search, semantic disagreement, unsupported
structure, or exhausted resources. Profile-level reporting also makes baseline
comparisons relation-specific: value matching may be informative for $E5$ but
silent about $E2$, while structural matching can disagree with a certified
same-space feasible-set relation. In practical benchmark use, the profile can
therefore serve both as a scorecard and as a debugging record, directing model
or prompt improvements to the stage where evidence actually fails rather than
to an undifferentiated final label.

A minimum auditable report should therefore pair the eligible cohort with
end-to-end counts, per-relation conditional coverage, outcome distributions,
re-verification rates, and the resource and map-completeness assumptions that
govern \unk{} and \textsc{false\_within\_declared\_universe}.

\paragraph{Scope of claims.} Results are limited to \texttt{gpt-5.4},
\texttt{claude-sonnet-4-6}, and \texttt{Qwen3.5-397B-A17B} on the frozen $173$-base cohort, one
generation per condition at temperature $0.0$, no repair or resampling, and
the $120$\,s evaluator policy. We claim neither universal superiority,
state-of-the-art generation, nor completeness over all mathematical programs.

\section{Limitations and Conclusion}

\paragraph{Limitations.} We evaluate three model snapshots with one generation
per condition; broader claims require more models and repeated sampling. The
supported envelope covers linear and bounded-discrete structure, while
quadratic and general nonlinear models yield
\textsc{unknown\_unsupported}. The finite $E1$ grammar is incomplete, $E3$
supports only the declared affine-lift schema, and $E6$ has no
certified-negative type. Exact rational verification also incurs resource
limits: under the $120$\,s policy, at least one dimension returns \ur{} in
$70$, $35$, and $39$ GPT, Sonnet, and Qwen cells. Results may depend on
provider-specific serving, and one sample per cell cannot quantify generation
variance. The paired prompt analysis is descriptive---no McNemar comparison
survives Holm correction---and the frozen cohort is not a random population
sample. Finally, no formal cell has a decided $E3$ outcome, so empirical
coverage of projected equivalence remains unestablished.

\paragraph{Conclusion.} \system{} reports each candidate--reference pair as a
certified $E0$--$E6$ semantic profile rather than one opaque label. The frozen
three-model study shows that execution is weak evidence of semantic
correctness, different snapshots fail at different stages, and exact evidence
with typed abstention makes both decisions and current coverage limits
auditable.

% --- end inlined sections.tex ---
\clearpage
\bibliography{references}

\clearpage
\appendix
\begin{center}
{\Large\bfseries Supplementary Material}\par
\vspace{0.5em}
{\large ModelEquivBench: Certifying Multi-Relational Evaluation of\par
LLM-Generated Optimization Models}
\end{center}
\vspace{0.5em}
\noindent
This technical appendix provides formal statements and proofs for the
E0--E6 relations, certificate and witness schemas, algorithms for profile
construction and independent verification, detailed cohort and protocol
information, complete three-model outcome accounting, paired analyses, failure
taxonomies, and reproduction instructions.
\par\medskip

\section{Formal Setup, Scope, and Typed Outcomes}
\label{supp:formal-setup}

\subsection{Optimization models and exact semantics}

\begin{definition}[Optimization model]\label{def:optimization-model}
A model $M$ is a tuple
\[
M=(n,D,A_{\rm ub},b_{\rm ub},A_{\rm eq},b_{\rm eq},c,s),
\]
where $n$ is the number of variables; $D=D_1\times\cdots\times D_n$ with
$D_i\in\{\mathbb{R},\mathbb{Z},\{0,1\}\}$ (possibly intersected with exact
rational bounds); $A_{\rm ub}x\le b_{\rm ub}$ and
$A_{\rm eq}x=b_{\rm eq}$ are exact-rational constraints; $c\in\mathbb{Q}^n$ is
a linear objective; and $s\in\{\min,\max\}$ is the objective sense. Its
feasible set and optimizer set are
\begin{align*}
F_M=\{x\in D:\;&A_{\rm ub}x\le b_{\rm ub},\\
                  &A_{\rm eq}x=b_{\rm eq}\},\\
X_M^*&=\operatorname*{argopt}_{x\in F_M} c^\top x.
\end{align*}
The optimal value $v_M^*$ is interpreted in the extended reals. For
minimization, $v_M^*=+\infty$ when $F_M=\varnothing$ and $v_M^*=-\infty$ when
the objective is unbounded below; the signs are reversed for maximization.
\end{definition}

The reference model is denoted $R$ and the generated candidate $C$. Every
numeric token serialized in LP/MPS is converted directly to a rational. Thus,
all certificate checks are exact with respect to the serialized model; the
claim does not reconstruct real-valued quantities that may have been rounded
before serialization.

\subsection{Profiles are partial typed claim sets}

The output for one candidate/reference pair is
\[
\mathcal{P}(C,R)=\bigl(E0,E1,E2,E3,E4,E5,E6\bigr),
\]
but this tuple is not a seven-bit vector and the indices are not a strength
ranking. Each component records a relation, its applicability, its evidence,
and a typed reason if no decision is returned.

\begin{table}[t]
\centering
\scriptsize
\setlength{\tabcolsep}{3pt}
\begin{tabular}{@{}p{0.39\columnwidth}p{0.52\columnwidth}@{}}
\toprule
State & Meaning \\
\midrule
\true & The stated positive relation is supported by evidence that passes the independent exact verifier. \\
\false{} / relation label & A supported negative relation has an explicit re-checkable witness. For E2 the aggregate negative is \fwithin{}, not an unrestricted universal claim. \\
\unk & The relation is applicable but the supported search did not establish either side. \\
\ur & The relation is applicable but a declared resource cap was reached. \\
\uu & A valid ingested object uses structure outside the implemented certification envelope. \\
N/A & The relation is mathematically inapplicable (e.g., E3 for a same-space candidate with no auxiliaries). \\
\absent & An upstream prerequisite was not established, so the relation was never entered. \\
\bottomrule
\end{tabular}
\caption{Typed outcome vocabulary. N/A and \absent{} are deliberately distinct.}
\label{tab:supp-outcomes}
\end{table}

\subsection{Evidence discipline and prerequisite DAG}

The framework obeys four global rules.
\begin{enumerate}
\item Every positive semantic fact is accepted only after exact independent
re-verification of a certificate.
\item Every supported negative fact carries a concrete witness.
\item E2--E6 are conditioned on a named verified representation map produced by
E1.
\item Missing prerequisites, unsupported structure, failed evidence
construction, and resource exhaustion never become guessed negatives.
\end{enumerate}

E0 gates ingestion. E1 gates map-conditioned relations. E2 and E3 ask different
feasible-set questions and do not form consecutive rungs. E4--E6 may be
considered once an admissible map exists, but none is silently inferred from a
lower index without recording the additional premises and evidence.

\section{E0 and E1: Ingestion and Representation Alignment}
\label{supp:e0-e1}

\subsection{E0: candidate construction and exact ingestion}

\begin{definition}[E0]
E0 asks whether a model response yields a deterministic candidate program
that executes within the declared cap, exports an LP/MPS model, and is ingested
into a structurally valid exact-rational model. E0 is \true{} with an
independently replayable trace; it is a genuine \false{} when a successful
model response contains no usable program, fails deterministically, or exports
an invalid model. A provider/API failure before a candidate exists leaves E0
\absent{}. A valid but unsupported model is \uu{}, and an execution-cap event
is \ur{}.
\end{definition}

This definition separates three questions that are frequently conflated:
provider delivery, candidate validity, and semantic correctness. E0 \true{}
asserts only the second. It makes no claim about E1--E6.

\begin{proposition}[E0 trace replay]
If the independent replay harness re-executes the recorded candidate program
under the specified environment and obtains the same normalized LP/MPS-derived
exact model, then the E0 positive claim is independently reproducible.
\end{proposition}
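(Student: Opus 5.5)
The proof is essentially definitional, so the plan is to unfold the E0 definition into a finite list of checkable conditions and show that the replay hypothesis discharges each one. By the definition of E0, a \true{} outcome asserts three things. First, a successful model response yields a deterministic candidate program. Second, that program executes within the declared cap and exports an LP/MPS model. Third, the export is ingested into a structurally valid exact-rational model $M=(n,D,A_{\rm ub},b_{\rm ub},A_{\rm eq},b_{\rm eq},c,s)$ in the sense of Definition~\ref{def:optimization-model}. I will fix what the recorded trace contains: the extracted program text together with its hash, the environment specification, the exported artifact, and the normalized exact model. Reproducibility will then mean that a third party holding only these objects can re-establish each of the three conditions without consulting the original generation run, the LLM text beyond the extracted program, or any cached verdict.

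The argument proceeds in order. (i) Extraction is deterministic, so the replayer obtains the same program from the persisted response, and the hash check confirms this. (ii) By hypothesis, re-execution under the specified environment terminates, since it produces an output, and it yields an LP/MPS artifact. This re-establishes the execution and export condition, including the cap, provided the replay is run under the same cap. (iii) Ingestion converts every serialized decimal token directly to a rational, which is a pure function of the artifact text. Hence equality of the normalized exact models means the replayed object coincides, coefficient by coefficient, with the one on which E0 was declared \true{}. (iv) Structural validity is checked independently on that model by exact rational recomputation: dimensions agree, each domain is admissible, each sense lies in $\{\min,\max\}$, and all entries are rational. This check involves no solver status and no floating point. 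Since the exact model is identical, validity transfers to the replay.

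The main obstacle is step (ii). Here the hypothesis "obtains the same normalized model" carries real weight, because program execution may be nondeterministic (library versions, variable ordering, timing near the cap). I would not try to prove determinism of execution. Instead I would make the claim explicitly conditional, as the proposition does. Reproducibility is asserted relative to the specified environment, and the comparison is made after normalization: a canonical ordering of variables and rows and canonical rational forms. With that normalization, permutations of export order do not break equality. I would also note explicitly that E0 \true{} asserts nothing about E1--E6, so replay only needs to reproduce the ingested object, not any solver result. That separation closes the argument.
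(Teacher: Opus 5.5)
Your proposal is correct and follows essentially the same route as the paper: the recorded trace fixes the candidate program, execution command, resource cap, exported model, and deterministic exact-ingestion procedure, so replay reconstructs the same normalized tuple of Definition~\ref{def:optimization-model} without relying on any solver's semantic judgment. Your unfolding of the E0 conditions and your explicit conditioning on the environment and the cap only spell that argument out in more detail. Step (i) is not needed, because the statement already has the replayer re-execute the recorded program rather than re-extract it from the response.
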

\begin{proof}
The trace specifies the candidate program, execution command, resource cap,
exported model, and deterministic exact-ingestion procedure. Replaying these
objects reconstructs the same normalized tuple in Definition~\ref{def:optimization-model}; the replay
decision does not rely on a solver's semantic judgment.
\end{proof}

\subsection{E1: admissible mappings}

A mapping aligns semantic decision variables before any relation is tested. The
general admissible schema contains:
\begin{enumerate}
\item type-compatible coordinate permutations, including identity;
\item binary complement $x_i\mapsto 1-x_i$ for 0--1 variables;
\item sign negation $x_i\mapsto -x_i$ for free continuous variables;
\item a declared affine lift/section pair for the supported E3 schema.
\end{enumerate}
The general mapping schema admits sign negation for free continuous variables.
In the reported formal run, however, the instantiated proposer manifest is the
finite family listed in Supplement Section~\ref{supp:e1-search}: stable-name
match, original-name match, identity, signature permutation, binary complement,
and affine lift. No standalone sign-negation proposal is instantiated, so E1
and aggregate E2 coverage do not quantify over such maps.

\begin{definition}[Same-space admissible map]
A same-space map is an affine bijection
\[
\varphi(x)=Px+q,
\]
where $P$ is a signed permutation matrix and $q$ is an exact offset. A negative
entry denotes either sign negation on a free continuous coordinate with zero
offset, or binary complement on a 0--1 coordinate with unit offset. It is
admissible only if dimensions, variable types, bounds, and every coordinate
transform are legal and the inverse is exact. In the reported finite proposal
family, negative entries are instantiated only by binary complements.
\end{definition}

\begin{definition}[Lift admissibility]
For $n_C>n_R$, a supported lift consists of a projection
$\pi:\mathbb{R}^{n_C}\to\mathbb{R}^{n_R}$ and an affine section
$\sigma(x)=Cx+d$ satisfying $\pi\circ\sigma=\mathrm{id}$ exactly, together with
compatible domains for all mapped variables.
\end{definition}

\begin{proposition}[Why E1 has no aggregate false]
Failure to find an admissible map in a finite searched family does not prove
that no semantics-preserving map exists outside that family. Therefore the
sound aggregate outcome is \unk{}, not \false{}.
\end{proposition}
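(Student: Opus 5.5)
The plan is to split the claim into two parts. First, a logical non-implication: ``no admissible map in the searched family $\mathcal{S}$'' does not entail ``no semantics-preserving map exists.'' Second, a soundness consequence: under the evidence discipline (rule 4 of the global rules: failed evidence construction never becomes a guessed negative), the only outcome consistent with soundness is \unk{}.

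For the first part I will exhibit, abstractly, a pair $(C,R)$ on which the searched family fails but some map outside it succeeds. The reported proposer family $\mathcal{S}$ is finite and strictly contained in $\Phiadm$. In particular it contains no standalone sign-negation proposals. Take $R$ with one free continuous variable $y$, constraint $y\le 1$, and objective $\min y$. Take $C$ with one free continuous variable $x$, constraint $-x\le 1$, and objective $\min -x$. Then every map in $\mathcal{S}$ on this one-variable pair is the identity: names may differ, signature permutation is trivial, and there are no binaries and no lift. The identity maps $F_C=\{x\ge -1\}$ onto a set different from $F_R=\{y\le 1\}$. Meanwhile $\varphi(x)=-x$ is an admissible same-space map under the Same-space admissible map definition, and it satisfies $\varphi(F_C)=F_R$ with a preserved objective. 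So the search can come back empty-handed, in the sense of finding no semantics-preserving map, even though one exists in $\Phiadm$.

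For the generic version of the argument I will note that any finite enumerated family $\mathcal{S}$ misses some element of $\Phiadm\setminus\mathcal{S}$ whenever that set is nonempty. The example shows this gap is realized by a semantically meaningful map, not just a vacuous one.

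For the second part I will argue by contradiction. Suppose the evaluator returned \false{} for E1 whenever $\mathcal{S}$ yields no verified map. On the pair above, this \false{} would assert the nonexistence of an admissible map. But $\varphi$ witnesses existence, so the claim is false. That violates the requirement in Table~\ref{tab:supp-outcomes} that a \false{} carry a re-checkable witness of a true negative. Moreover, no finite witness can certify nonexistence over an unenumerated family, so no such witness exists. The remaining applicable, non-resource, supported state that asserts nothing is \unk{}.

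The main obstacle is pinning down what ``semantics-preserving'' means. E1 is defined as existence of an admissible map, but the proposition speaks of semantics. I will fix it to mean an admissible $\varphi\in\Phiadm$ under which E2 equality holds. The counterexample then has to respect type and bound legality, which is why it uses a free continuous variable.
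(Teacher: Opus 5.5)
Your concrete counterexample never reaches the situation the proposition is about. E1's outcome is computed from \emph{admissibility} alone: dimension match, bijectivity, type-legality of each coordinate transform, and, for lifts, injectivity and $\pi\circ\sigma=\mathrm{id}$. It does not depend on whether the map makes the feasible sets equal. On your one-variable pair the identity is proposed and is admissible (free continuous to free continuous), so E1 is \true{}. The premise ``the searched family yields no admissible map'' therefore fails, and the contradiction in your second part is run on a pair where the hypothetical \false{} rule would never fire. Re-reading ``semantics-preserving'' as ``admissible with E2 equality'' does not rescue this, because that is not the predicate E1 decides. Nor can any sign-negation example work. Deleting the signs and offsets from an admissible signed permutation leaves an admissible permutation, so a missing sign-negation proposal can never by itself empty the E1 search; it only matters for E2--E6. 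What your pair actually exhibits is an E2 phenomenon, and there the paper does not abstain. With $n=1$ and no binary coordinate, the declared same-space universe is just $\{\mathrm{id}\}$ and satisfies the completeness conditions ($n\le 7$, no binary coordinate). The identity is refuted in both directions by the witnesses $y=2$ and $y=-2$, so aggregate E2 is \fwithin{}, a negative explicitly scoped to the instantiated family.

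Your remark that no finite witness can certify nonexistence over an unsearched family, combined with the observation that E2--E6 could become positive under an unsearched map, is the paper's entire proof. The paper uses no example, so you can simply drop yours. If you want an illustration, it must make \emph{every} instantiated proposal inadmissible. For instance, let the candidate eliminate an auxiliary variable that the reference keeps: $\Fref=\{(y_1,y_2):y_2=2y_1,\ y_1\le 1\}$ and $\Fcand=\{x:x\le 1\}$. Identity and signature permutation need equal dimensions, and name matching and affine lift only handle extra \emph{candidate} variables, so E1 is \unk{}. Yet $\Fref$ is exactly the image of $\Fcand$ under $x\mapsto(x,2x)$, a correspondence outside the searched family (indeed outside $\Phiadm$).
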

\begin{proof}
The deterministic mapping search evaluates a declared subset of all possible
bijections and affine encodings. A negative conclusion over that subset cannot quantify over
unsearched maps. Since E2--E6 may become positive under an unsearched map, the
only sound result is a coverage-limited abstention.
\end{proof}

Each accepted E1 artifact records the mapping type, exact parameters, domain
compatibility checks, and a stable identifier used by every downstream
certificate and witness.

\section{E2: Same-Space Mapped Feasible-Set Relations}
\label{supp:e2}

\subsection{Directional containment certificates}

Let
\[
\begin{aligned}
P &= \{x:Gx\le h,\ Hx=h_{\rm eq}\},\\
Q &= \{x:Ax\le b\}.
\end{aligned}
\]
with exact rational data after domain handling. The certificate form follows
Farkas' lemma and standard polyhedral containment arguments
\citep{schrijver1986theory}. Equalities may be carried
explicitly or converted to paired inequalities where supported.

\begin{proposition}[Farkas row certificate]
\label{prop:farkas-row}
For a row $a_i^\top x\le b_i$ of $Q$, suppose there exist
$\mu_i\ge0$ and unrestricted $\lambda_i$ such that
\[
a_i=G^\top\mu_i+H^\top\lambda_i,
\qquad
b_i\ge h^\top\mu_i+h_{\rm eq}^\top\lambda_i.
\]
Then every $x\in P$ satisfies $a_i^\top x\le b_i$.
\end{proposition}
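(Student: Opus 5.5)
The plan is a direct weak-duality computation. Fix an arbitrary $x\in P$, so that $Gx\le h$ and $Hx=h_{\rm eq}$. Substituting the certificate identity $a_i=G^\top\mu_i+H^\top\lambda_i$ gives
\[
a_i^\top x=\mu_i^\top(Gx)+\lambda_i^\top(Hx).
\]
We then bound the two terms separately.

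\emph{Inequality term.} Since $\mu_i\ge0$ componentwise and $Gx\le h$, we have $\mu_i^\top(Gx)\le\mu_i^\top h$. This is the one place where the sign condition on $\mu_i$ is used.

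\emph{Equality term.} Since $Hx=h_{\rm eq}$ holds with equality, $\lambda_i^\top(Hx)=\lambda_i^\top h_{\rm eq}$ for any sign of $\lambda_i$. This is why $\lambda_i$ may be unrestricted.

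Combining the two bounds gives $a_i^\top x\le h^\top\mu_i+h_{\rm eq}^\top\lambda_i\le b_i$, where the last step is the second certificate condition. Since $x\in P$ was arbitrary, every point of $P$ satisfies $a_i^\top x\le b_i$.

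There is no real obstacle; the only care needed concerns two points.

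\emph{Exactness.} All quantities are rational, so the verifier's checks of the identity and the inequality are exact. The implication therefore holds over $\mathbb{R}^n$ with no tolerance.

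\emph{Domains.} Integrality or binary restrictions on the variables only shrink $P$. The containment is proved for the polyhedral relaxation and so holds a fortiori for $P\cap D$. The proof should also remark that bounds have already been folded into the rows of $G$, $h$ or $A$, $b$.
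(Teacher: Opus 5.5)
Your proof is correct and is essentially the paper's own argument. You substitute the certificate identity, bound $\mu_i^\top Gx\le\mu_i^\top h$ using $\mu_i\ge0$, use $Hx=h_{\rm eq}$ for the unrestricted term, and chain the result with $b_i\ge h^\top\mu_i+h_{\rm eq}^\top\lambda_i$; your remarks on exactness and on integrality only shrinking $P$ are harmless additions that the paper's one-line proof leaves implicit.
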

\begin{proof}
For $x\in P$,
\[
a_i^\top x=\mu_i^\top Gx+\lambda_i^\top Hx
\le \mu_i^\top h+\lambda_i^\top h_{\rm eq}\le b_i,
\]
where the first inequality uses $\mu_i\ge0$ and $Gx\le h$.
\end{proof}

\begin{corollary}[Containment]
If Proposition~\ref{prop:farkas-row} has a verified multiplier pair for every row of
$Q$, then $P\subseteq Q$.
\end{corollary}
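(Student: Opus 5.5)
The plan is to apply Proposition~\ref{prop:farkas-row} to each row of $Q$ separately and then intersect. Write $Q=\{x:Ax\le b\}$ with rows $a_i^\top x\le b_i$ for $i=1,\ldots,m$, so that
\[
Q=\bigcap_{i=1}^m\{x: a_i^\top x\le b_i\}.
\]
The hypothesis supplies, for each $i$, a verified pair $(\mu_i,\lambda_i)$ with $\mu_i\ge 0$, $a_i=G^\top\mu_i+H^\top\lambda_i$, and $b_i\ge h^\top\mu_i+h_{\rm eq}^\top\lambda_i$.

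The steps are as follows. First, fix an arbitrary $x\in P$, so that $Gx\le h$ and $Hx=h_{\rm eq}$. Second, for each row $i$, invoke Proposition~\ref{prop:farkas-row} with the pair $(\mu_i,\lambda_i)$ to conclude $a_i^\top x\le b_i$. Third, since $i$ was arbitrary, $Ax\le b$ holds, hence $x\in Q$; and since $x$ was arbitrary, $P\subseteq Q$. If $Q$ has no rows, $Q=\mathbb{R}^n$ and the claim is trivial. If $P=\varnothing$, the statement holds vacuously, and no special handling is needed because the per-row argument quantifies over elements of $P$.

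The only point needing care is the bookkeeping between the abstract sets and the ingested models. Equalities of $Q$, if present, must be split into paired inequalities $a^\top x\le\beta$ and $-a^\top x\le-\beta$, each carrying its own multiplier pair, so that ``every row of $Q$'' covers both directions. Likewise, domain restrictions such as bounds must already be folded into $G,h$ and $A,b$, as the setup states, so that $P$ and $Q$ are exactly the sets the verifier checks. Integrality plays no role in this direction: if $P$ is intersected with an integer domain, it is a subset of the polyhedron used in the certificate, and the containment is inherited.
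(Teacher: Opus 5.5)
Your proposal is correct and takes essentially the same approach as the paper, which states the corollary without a separate proof: apply Proposition~\ref{prop:farkas-row} row by row and intersect the resulting half-spaces that define $Q$. Your extra remarks on splitting equalities into paired inequalities and on folding bounds into $G,h$ and $A,b$ match the paper's own conventions.
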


\begin{proposition}[Separating witness]
If a rational point $z$ is independently verified to satisfy all constraints
and domains of $P$ and violates at least one row of $Q$, then $P\nsubseteq Q$.
\end{proposition}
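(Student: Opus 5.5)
The argument is a direct contrapositive of the definition of containment, so I would prove it by exhibiting $z$ as an element of $P\setminus Q$.

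\emph{Step 1: $z\in P$.} By hypothesis, $z$ has been independently verified to satisfy every constraint of $P$. This means $Gz\le h$ and $Hz=h_{\rm eq}$ hold as exact rational identities and inequalities. It also satisfies every domain restriction carried into $P$ (integrality, binarity, bounds). Since the verifier computes in exact arithmetic on the serialized rational data, there is no tolerance gap. The checked inequalities are therefore precisely the defining conditions of $P$, and so $z\in P$.

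\emph{Step 2: $z\notin Q$.} By hypothesis, there is an index $i$ with $a_i^\top z>b_i$. The set $Q$ is the intersection of the half-spaces $\{x:a_j^\top x\le b_j\}$, so failing a single row excludes $z$ from $Q$.

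\emph{Step 3: conclude.} Combining the two steps gives $z\in P\setminus Q$, hence $P\nsubseteq Q$.

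The only point needing care is Step 1. We must ensure that ``domains of $P$'' in the verifier matches the set actually being compared, for example $\varphi(\Fcand)$ after the mapped transform. Otherwise a point could satisfy the linear rows but violate an integrality requirement. To handle this, I would note explicitly that $P$ denotes the exact-rational set after domain handling, as stated at the start of the subsection, and that the verifier checks these same conditions.

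I would add one remark. This is exactly the asymmetry with Proposition~\ref{prop:farkas-row}: there, membership of all points in $Q$ needs multipliers, whereas here a single exact point suffices for a sound negative.
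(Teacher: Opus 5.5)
Your proof is correct and matches the paper's, which simply observes that $z$ lies in $P$ but not in $Q$, so it directly refutes $P\subseteq Q$. The argument is really a direct counterexample to the universal statement rather than a contrapositive, but that is only a wording slip.
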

\begin{proof}
The point $z$ belongs to $P$ but not $Q$, which directly refutes containment.
\end{proof}

For bounded 0--1 systems, exhaustive enumeration of all declared assignments is
an exact finite certificate. The replay verifier re-checks every assignment or
a compact deterministic enumeration trace.

\subsection{Per-map relation classification}

Fix one verified same-space map $\varphi$ and write
$P=\varphi(\Fcand)$ and $Q=\Fref$. Two directional checks yield:
\begin{itemize}
\item \emph{equal}: $P\subseteq Q$ and $Q\subseteq P$;
\item \emph{strict relaxation}: $Q\subseteq P$ and a witness proves
$P\nsubseteq Q$;
\item \emph{strict restriction}: $P\subseteq Q$ and a witness proves
$Q\nsubseteq P$;
\item \emph{incomparable}: witnesses refute both directions;
\item \emph{unknown}: at least one required direction is unresolved.
\end{itemize}

\begin{proposition}[Soundness of the five-way per-map diagnostic]
Whenever all evidence required by one of the first four labels above passes the
independent verifier, the corresponding set relation holds under the recorded
map $\varphi$.
\end{proposition}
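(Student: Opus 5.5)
The plan is a case analysis over the four decided labels, with each directional fact supplied by an earlier result. Every positive containment $X\subseteq Y$ (with $X,Y\in\{P,Q\}$) is established either by verified Farkas multipliers for every row of $Y$, or, for bounded 0--1 systems, by an exhaustive enumeration certificate. In the first case the Containment Corollary applies directly. In the second, the verifier re-checks every declared assignment of $X$ against the constraints of $Y$. Every refuted containment $X\nsubseteq Y$ is established by the Separating-witness Proposition.

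\emph{Step 1: justify applying these results to $P=\varphi(\Fcand)$.} $P$ is the image of $\Fcand$ under the admissible affine bijection $\varphi(x)=Px+q$ verified in E1. Its inverse is exact, so $\varphi(\Fcand)=\{y\in\varphi(D_C): A_{\rm ub}\varphi^{-1}(y)\le b_{\rm ub},\ A_{\rm eq}\varphi^{-1}(y)=b_{\rm eq}\}$. This is again an exact-rational system of the form used in Proposition~\ref{prop:farkas-row}.

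The domain part needs care, because the Farkas argument only covers the linear rows. Each transform in $\Phiadm$ maps its coordinate domain onto a domain of the same type:
\begin{itemize}
\item a permutation relocates the coordinate unchanged;
\item binary complement maps $\{0,1\}$ onto $\{0,1\}$;
\item sign negation maps $\mathbb{R}$ onto $\mathbb{R}$.
\end{itemize}
Type-compatibility in E1 then makes the transformed domain equal the reference domain coordinatewise. With bounds folded into rows, the linear certificate only needs to cover the rows, and integrality is shared by both sides. I expect this domain bookkeeping to be the main obstacle. Farkas multipliers certify containment of the linear relaxations, and containment of the relaxations implies containment of the integer-domain sets only because the two domains coincide. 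I would state this as a short lemma: if $X=\{x\in D:\text{rows}_X\}$, $Y=\{x\in D:\text{rows}_Y\}$ share $D$, and every row of $Y$ is valid for the relaxation of $X$, then $X\subseteq Y$.

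\emph{Step 2: the four labels.}
\begin{itemize}
\item \emph{Equal:} both containments are certified, so $P\subseteq Q$ and $Q\subseteq P$ give $P=Q$, that is, $\Fref=\varphi(\Fcand)$.
\item \emph{Strict relaxation:} $Q\subseteq P$ is certified, and a verified point $z\in P\setminus Q$ shows $P\neq Q$. Hence $\Fref\subsetneq\varphi(\Fcand)$.
\item \emph{Strict restriction:} this is the symmetric case, giving $\varphi(\Fcand)\subsetneq\Fref$.
\item \emph{Incomparable:} the two witnesses $z_1\in P\setminus Q$ and $z_2\in Q\setminus P$ refute both containments.
\end{itemize}

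\emph{Step 3: exactness of the verification.} All checks are exact rational identities and inequalities computed from the serialized data. Verifier acceptance is therefore logically equivalent to the hypotheses of the invoked propositions, with no tolerance slack. The conclusions hold for the recorded $\varphi$ only; nothing is claimed about other maps.
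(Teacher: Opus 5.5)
Your proposal is correct, and its core (Step 2) is exactly the paper's proof: equality follows from antisymmetry of inclusion, each strict label combines one certified inclusion with a verified point in the set difference, and incomparability follows from two set-difference witnesses. Your Step 1 lemma is sound. It says that Farkas multipliers on the linear rows give containment of the domain-restricted sets only because type-compatibility makes $\varphi(D_C)$ coincide with $D_R$. This is bookkeeping the paper leaves implicit in its phrase ``exact rational data after domain handling,'' not a different route.
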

\begin{proof}
Equality follows from antisymmetry of set inclusion. Each strict label combines
one certified inclusion with a verified point in the set difference. Two
verified set-difference points establish incomparability.
\end{proof}

\subsection{Aggregation across maps}

Let $\mathcal{M}\subseteq\Phiadm$ be the declared verified map universe.
\begin{definition}[Aggregate E2]
The aggregate E2 result is:
\begin{itemize}
\item \true{} if some $\varphi\in\mathcal{M}$ has certified equality;
\item \fwithin{} only when $\mathcal{M}$ is declared complete and every member
has a verified non-equality witness;
\item \unk{} otherwise.
\end{itemize}
\end{definition}

\begin{proposition}[Soundness of aggregate E2]
A \true{} aggregate means that a named admissible representation makes the
feasible sets equal. A \fwithin{} aggregate means no map in the complete
declared universe makes them equal. Neither statement quantifies over maps
outside the declared grammar.
\end{proposition}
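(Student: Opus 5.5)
The proof unfolds Definition of Aggregate E2 clause by clause and reduces each aggregate claim to the per-map soundness results already established. The three parts of the statement are handled in turn.

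\emph{\true{} case.} By definition, a \true{} aggregate is emitted only when some $\varphi\in\mathcal{M}$ carries certified equality, meaning both directional containments passed the independent verifier. Two facts give the equality. First, the Containment corollary gives $\varphi(\Fcand)\subseteq\Fref$ and $\Fref\subseteq\varphi(\Fcand)$ from verified Farkas row multipliers (Proposition~\ref{prop:farkas-row}). Alternatively, for bounded 0--1 systems, the replayed exhaustive enumeration gives the same inclusions. Second, the soundness proposition for the five-way per-map diagnostic then yields $\varphi(\Fcand)=\Fref$ by antisymmetry. Since $\mathcal{M}\subseteq\Phiadm$ and every member was admissibility-checked in E1, this $\varphi$ is a named admissible representation. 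This gives the existential reading.

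\emph{\fwithin{} case.} The definition requires two things: $\mathcal{M}$ is declared complete, and every $\varphi\in\mathcal{M}$ has a verified non-equality witness. For each $\varphi$, that witness is a rational point lying in one of $\varphi(\Fcand)$ or $\Fref$ and violating a row of the other. By the Separating witness proposition, one of the two inclusions fails, so $\varphi(\Fcand)\neq\Fref$. Quantifying over the finitely many members shows that no $\varphi\in\mathcal{M}$ equalizes the feasible sets. The main point to handle carefully is the word ``complete''. It means only that $\mathcal{M}$ exhausts the declared family, namely the type-compatible permutations with $n\le7$ and no binary coordinates. The universal statement is therefore exactly over $\mathcal{M}$, and I would invoke the completeness declaration only to justify that no declared member was skipped. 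I would not use it to claim anything beyond that family.

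\emph{Scope clause.} Both conclusions mention only maps in $\mathcal{M}$. The positive one is witnessed by a member, and the negative one quantifies over members. A map outside the declared grammar is neither used nor refuted, for example a sign negation that was not instantiated in the formal proposer. This mirrors the argument of the E1 ``no aggregate false'' proposition. To make the point concrete, I would note that a single unsearched map could equalize the sets even when \fwithin{} holds, so no stronger claim is sound.

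The only real obstacle is bookkeeping. I must check that the \true{} clause can never fire from an unverified or merely \unk{} per-map result. I must also check that \fwithin{} never fires when some member is \unk{}. Both follow directly from the definition's requirement that evidence be verified.
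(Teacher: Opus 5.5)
Your proposal is correct and follows the paper's own argument. The \true{} case inherits the per-map equality (two verified containments plus antisymmetry) existentially. The \fwithin{} case combines completeness of the finite declared universe with a verified non-equality witness for every member, giving a universal claim restricted to that universe. Your explicit use of the Farkas corollary and the separating-witness proposition, and your scope remark about unsearched maps such as sign negation, spell out what the paper's three-sentence proof leaves implicit.
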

\begin{proof}
The positive case is existential and inherits the per-map equality proof. In
the negative case, completeness of the finite declared universe plus a verified
non-equality witness for every member proves the universal claim restricted to
that universe. Without completeness, such a universal conclusion is invalid,
so the framework returns \unk{}.
\end{proof}

The implementation declares the simple type-compatible permutation universe
complete only when it is exhaustively enumerable (currently $n\le7$) and no
binary coordinate is present. Binary-complement subsets are not exhaustively
enumerated, so any binary coordinate makes the declared universe incomplete.
This conservative boundary prevents failed map search from being mislabeled as
semantic inequality.

\subsection{Degenerate feasible sets}

The empty set obeys ordinary containment conventions: $\varnothing\subseteq Q$
for every $Q$, and $P\subseteq\varnothing$ only when $P=\varnothing$. A
positive empty-set equality therefore requires evidence that both sides are
empty. A solver status alone is insufficient; the accepted artifact must be
replayable under the exact supported procedure. Unboundedness is not an E2
special case because E2 concerns sets, not objective values.

\section{E3: Projected Feasible-Set Equality Under an Affine Lift}
\label{supp:e3}

E3 applies when the candidate has auxiliary variables and a supported affine
projection/section pair has passed E1.

\begin{proposition}[Affine-lift equality]\label{prop:affine-lift}
Let $\pi:\mathbb{R}^{n_C}\to\mathbb{R}^{n_R}$ and
$\sigma:\mathbb{R}^{n_R}\to\mathbb{R}^{n_C}$ satisfy:
\begin{align}
\pi(\Fcand)&\subseteq\Fref,\label{eq:e3-a}\\
\sigma(\Fref)&\subseteq\Fcand,\label{eq:e3-b}\\
\pi\circ\sigma&=\mathrm{id}_{\mathbb{R}^{n_R}}.\label{eq:e3-c}
\end{align}
Then $\pi(\Fcand)=\Fref$.
\end{proposition}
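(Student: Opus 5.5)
The plan is to prove $\pi(\Fcand)=\Fref$ by double inclusion. Hypothesis~\eqref{eq:e3-a} already gives one direction, so only $\Fref\subseteq\pi(\Fcand)$ remains. This is where the section~$\sigma$ and the right-inverse identity~\eqref{eq:e3-c} enter.

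For the reverse inclusion, fix an arbitrary $x\in\Fref$ and use the explicit preimage $y:=\sigma(x)\in\mathbb{R}^{n_C}$. By~\eqref{eq:e3-b}, $y\in\Fcand$, since $\sigma$ maps the whole reference feasible set into the candidate feasible set. By~\eqref{eq:e3-c}, $\pi(y)=\pi(\sigma(x))=x$. Hence $x=\pi(y)$ with $y\in\Fcand$, so $x\in\pi(\Fcand)$. As $x$ was arbitrary, $\Fref\subseteq\pi(\Fcand)$, and combining this with~\eqref{eq:e3-a} gives equality by antisymmetry of set inclusion, as in the per-map equality case of E2.

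The argument is purely set-theoretic and does not need $\pi$ or $\sigma$ to be affine. Affineness matters only for how the hypotheses are certified: \eqref{eq:e3-a} and \eqref{eq:e3-b} via Farkas row certificates (Proposition~\ref{prop:farkas-row} and its containment corollary, applied after substituting the affine maps into the constraint systems), and~\eqref{eq:e3-c} as an exact rational identity on the matrices of $\pi\circ\sigma$. The empty case also needs no separate treatment: if $\Fref=\varnothing$ the reverse inclusion is vacuous, and~\eqref{eq:e3-a} then forces $\Fcand=\varnothing$.

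I expect no step to be a real obstacle. The only point needing care is that~\eqref{eq:e3-c} is required on all of $\mathbb{R}^{n_R}$, or at least on $\Fref$, so that $\pi(\sigma(x))=x$ holds for the chosen $x$. It is also worth recording that only this right-inverse property is used, not injectivity of $\pi$. That is why the conclusion concerns the projection $\pi(\Fcand)$ and does not yield a bijection of feasible or optimizer sets, consistent with the remark that E3 equality does not imply E6.
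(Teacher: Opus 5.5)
Your proof is correct and is the same argument the paper gives. Hypothesis \eqref{eq:e3-a} supplies one inclusion; for the reverse, any $x\in\Fref$ satisfies $\sigma(x)\in\Fcand$ by \eqref{eq:e3-b} and $x=\pi(\sigma(x))\in\pi(\Fcand)$ by \eqref{eq:e3-c}, and your side remarks (the empty case, non-use of injectivity of $\pi$, and why E3 equality does not yield E6) are consistent with the paper.
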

\begin{proof}
Equation~\eqref{eq:e3-a} gives one containment. For any $x\in\Fref$,
Equation~\eqref{eq:e3-b} gives $\sigma(x)\in\Fcand$, and
Equation~\eqref{eq:e3-c} yields
$x=\pi(\sigma(x))\in\pi(\Fcand)$. Hence
$\Fref\subseteq\pi(\Fcand)$, proving equality.
\end{proof}

The two set-containment obligations are discharged using exact Farkas evidence
or bounded enumeration, while the right-inverse identity is checked by exact
matrix arithmetic. This is a deliberately restricted, certifying instance of
projected-polyhedron and extended-formulation reasoning
\citep{yannakakis1991expressing,conforti2013extended,kellner2015containment,liberti2009reformulations}. Notice that $\pi$ need not be injective; the section
$\sigma$ is injective as a consequence of $\pi\circ\sigma=\mathrm{id}$.

E3 returns N/A for same-space candidates with no auxiliary variables. It
returns \unk{} outside the declared affine-lift schema, including unsupported
integer-auxiliary elimination, non-affine projection, and reference-equality
cases outside the implemented schema. If E0 or E1 is missing, E3 is \absent{},
not N/A. No formal-cohort cell satisfies the implemented E3 certificate schema;
this is a zero-coverage boundary,
not evidence against Proposition~\ref{prop:affine-lift}.

\section{E4--E6: Objective Order, Optimal Value, and Optimizers}
\label{supp:e4-e6}

\subsection{E4: objective-order equivalence}

For a verified map pairing candidate point $y$ with reference point
$x=\varphi(y)$, define the strict and weak preference relations induced by the
recorded objective sense.

\begin{definition}[E4]
E4 is positive when, for every paired feasible $y_1,y_2$,
\[
f_R(\varphi(y_1))\preceq_R f_R(\varphi(y_2))
\iff
f_C(y_1)\preceq_C f_C(y_2),
\]
where $\preceq_M$ uses the sense of model $M$.
\end{definition}

\begin{proposition}[Positive-affine objective certificate]
If the senses agree and, on paired feasible points,
\[
f_C(y)=\alpha f_R(\varphi(y))+\beta
\qquad\text{for exact }\alpha>0,\ \beta\in\mathbb{Q},
\]
then E4 holds.
\end{proposition}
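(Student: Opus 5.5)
The argument is a direct unwinding of the definition of E4. It rests on one fact: a strictly increasing map of the extended reals (here of the rationals) preserves the order $\le$ in both directions. I will fix arbitrary paired feasible candidate points $y_1,y_2\in\Fcand$. Since the senses agree, $\preceq_R$ and $\preceq_C$ are the same relation on values: both are $\le$ for minimization, both $\ge$ for maximization. I then treat the two senses as two cases, or reduce maximization to minimization by negating both objectives. Under that negation the identity $f_C=\alpha f_R\circ\varphi+\beta$ becomes $-f_C=\alpha(-f_R\circ\varphi)-\beta$, with the same $\alpha>0$, so the hypothesis is preserved.

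In the minimization case, set $t_j=f_R(\varphi(y_j))$ for $j=1,2$. The hypothesis gives $f_C(y_j)=\alpha t_j+\beta$. I then check two directions. If $t_1\le t_2$, then multiplying by $\alpha>0$ and adding $\beta$ gives $f_C(y_1)\le f_C(y_2)$. Conversely, if $\alpha t_1+\beta\le\alpha t_2+\beta$, then subtracting $\beta$ and dividing by $\alpha>0$ gives $t_1\le t_2$. Together these establish the biconditional in the definition of E4 for every paired feasible pair, which is exactly the statement. For the maximization case the same computation works with $\ge$, or it follows from the reduction above.

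The main point to watch is not the algebra but the scope of the hypothesis. The affine identity is only required on paired feasible points, so I must make sure the biconditional is only ever evaluated there. That is exactly the quantifier in the definition of E4, so no off-feasible-set information is needed. It is also worth noting that the hypothesis $\alpha>0$ is essential. If $\alpha<0$, the order is reversed; if $\alpha=0$, the equivalence fails whenever $f_R\circ\varphi$ is nonconstant on $\Fcand$. Exactness of $\alpha,\beta\in\mathbb{Q}$ matters only for the verifier. The implemented identity, lift, and complement schemas are instances with $\alpha=1$, where $\beta$ is the constant absorbed from complement offsets.
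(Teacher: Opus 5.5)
Your proof is correct and takes the same approach as the paper: the paper simply notes that a strictly positive affine transform $t\mapsto\alpha t+\beta$ preserves weak, strict, and equality comparisons, with the additive constant cancelling, and your case split on the common sense (or the reduction of maximization to minimization by negation) spells that one-line argument out. One small slip: objective values at feasible points of continuous models are real, not necessarily rational, but this changes nothing because you only use the strict monotonicity of $t\mapsto\alpha t+\beta$.
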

\begin{proof}
A strictly positive affine transform preserves all weak, strict, and equality
comparisons. The additive constant cancels in pairwise differences.
\end{proof}

The implemented positive forms include identity, supported lifts, and binary
complements that induce an exact positive-affine identity. A supported negative
is a pair of feasible points whose rankings disagree, re-checked against both
models and the named map.

\subsection{E5: optimal-value equality}

\begin{definition}[E5]
E5 asks whether $v_R^*=v_C^*$ as extended-real values under the original
variable domains and recorded senses.
\end{definition}

\begin{proposition}[Primal--dual optimum certificate]
For a continuous rational LP, a primal-feasible point and a dual-feasible point
with equal objective values certify the exact optimum by weak duality. If such
certificates for $R$ and $C$ yield the same rational value, E5 is positive.
\end{proposition}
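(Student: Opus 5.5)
The plan is to prove the statement directly from weak duality, applied separately to $R$ and to $C$, and then compare the two certified values. First I fix a convention. By replacing $c$ with $-c$ when needed, every model is written as a minimization, so that $v^*_M$ changes only by a sign that is recorded together with the sense. I then put the LP in the standard inequality/equality form of Definition~\ref{def:optimization-model}, with the bounds folded into $A_{\rm ub}x\le b_{\rm ub}$ and all domains continuous. Its dual is
\[
\max\; -b_{\rm ub}^\top u + b_{\rm eq}^\top w \quad\text{s.t.}\quad -A_{\rm ub}^\top u + A_{\rm eq}^\top w = c,\quad u\ge 0.
\]
The certificate consists of a rational primal point $\bar x$ and a rational dual point $(\bar u,\bar w)$. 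The verifier checks exactly that $\bar x\in F_M$, that $(\bar u,\bar w)$ is dual feasible, and that $c^\top\bar x = -b_{\rm ub}^\top\bar u + b_{\rm eq}^\top\bar w$.

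The key step is the weak-duality sandwich, which is essentially the argument of Proposition~\ref{prop:farkas-row} with the objective row in place of a constraint row. For any $x\in F_M$,
\[
c^\top x = -\bar u^\top A_{\rm ub}x + \bar w^\top A_{\rm eq}x \;\ge\; -\bar u^\top b_{\rm ub} + \bar w^\top b_{\rm eq}.
\]
The inequality uses $\bar u\ge 0$ and $A_{\rm ub}x\le b_{\rm ub}$. Hence the dual value is a lower bound on $c^\top x$ for every feasible $x$.

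From this bound the exact optimum follows in three steps:
\begin{itemize}
\item The lower bound shows $v^*_M\ge$ the dual value. In particular $v^*_M\neq-\infty$, which rules out unboundedness.
\item Feasibility of $\bar x$ gives $v^*_M\le c^\top\bar x<+\infty$, which rules out infeasibility.
\item Since the two values coincide, $v^*_M=c^\top\bar x$ is finite and attained.
\end{itemize}
Applying this to $R$ and to $C$ yields $v^*_R=q_R$ and $v^*_C=q_C$ as rational numbers. If the verifier confirms $q_R=q_C$ after undoing the sense flip consistently, then $v^*_R=v^*_C$ as extended reals, which is exactly the condition of E5.

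The main obstacle is not the algebra but the \emph{scope of the premises}. The argument needs every variable to be continuous, so that $F_M$ is the polyhedron to which weak duality applies. It also needs all checks to be carried out on the exact serialized rational data, so that equality of the two values is genuine rather than tolerance-based. I would state these conditions explicitly. I would also remark that this is why the certificate is refused for integer domains: there the dual value bounds only the relaxation and does not bound $v^*_M$ over the discrete domain $D$. Finally, I would check the max/min sign bookkeeping carefully, since a sense mismatch between $R$ and $C$ must be normalized before the values are compared.
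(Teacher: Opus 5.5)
Your proposal is correct and takes the same route as the paper: a weak-duality sandwich for $R$ and for $C$ separately, closed by equality of the primal and dual values, followed by an exact comparison of the two certified optima. You make explicit what the paper's short proof leaves implicit, namely the concrete dual, the exclusion of $\pm\infty$ (so each optimum is finite and attained), and the min/max sign bookkeeping.
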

\begin{proof}
Weak duality places every primal objective on one side of every dual objective.
Equality of a feasible primal and dual pair closes the bound and proves
optimality. Repeating this for both models and comparing the exact values proves
E5.
\end{proof}

For integer or binary variables, an LP-relaxation dual certificate does not
certify the integer optimum and is refused. Bounded 0--1 cases may instead use
exact enumeration. E5 can also be implied by a certified feasible-set equality
and objective identity, but the report records those premises rather than
silently transferring a global equivalence label.

\subsection{E6: optimizer-set equivalence}

\begin{definition}[E6]
E6 is positive when the recorded admissible map restricts to a bijection
between $X_C^*$ and $X_R^*$.
\end{definition}

\begin{proposition}[Optimizer transport]
Suppose $\varphi$ is a bijection between $\Fcand$ and $\Fref$ and E4 holds.
Then $y\in X_C^*$ if and only if $\varphi(y)\in X_R^*$, so $\varphi$ restricts
to an optimizer-set bijection.
\end{proposition}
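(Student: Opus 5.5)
We argue directly from the definitions of $X_M^*$ and E4, using the bijectivity of $\varphi:\Fcand\to\Fref$ to move between the two feasible sets. Fix the notation: write $y\preceq_C y'$ when $y$ is at least as good as $y'$ under the sense of $C$ (i.e.\ $f_C(y)\le f_C(y')$ for $\min$, $\ge$ for $\max$), and similarly $x\preceq_R x'$ on reference points. With this convention, $y\in X_C^*$ holds exactly when $y\in\Fcand$ and $y\preceq_C y'$ for every $y'\in\Fcand$. The analogous characterization holds for $X_R^*$ over $\Fref$.

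For the forward direction, take $y\in X_C^*$ and any $x'\in\Fref$. Surjectivity of $\varphi$ onto $\Fref$ gives some $y'\in\Fcand$ with $\varphi(y')=x'$. Optimality of $y$ gives $y\preceq_C y'$. E4, applied to the paired feasible points $y,y'$, converts this into $\varphi(y)\preceq_R\varphi(y')=x'$. Since $\varphi(y)\in\Fref$ and $x'$ was arbitrary, we conclude $\varphi(y)\in X_R^*$.

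For the reverse direction, suppose $\varphi(y)\in X_R^*$ with $y\in\Fcand$. For any $y'\in\Fcand$, $\varphi(y')\in\Fref$, so $\varphi(y)\preceq_R\varphi(y')$. The reverse implication of the E4 biconditional then gives $y\preceq_C y'$, hence $y\in X_C^*$. Combining the two directions, $\varphi(X_C^*)\subseteq X_R^*$, and every $x\in X_R^*$ equals $\varphi(y)$ for the unique preimage $y$, which lies in $X_C^*$. Injectivity is inherited from $\varphi$, so the restriction is a bijection $X_C^*\to X_R^*$. Empty optimizer sets (infeasible or unbounded models) are covered automatically: the equivalence shows one set is empty iff the other is, and the empty map is a bijection.

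The main obstacle is bookkeeping around E4's definition. The biconditional must be read with each model's own sense, so "at least as good" is the correct reading of $\preceq_M$ even when the senses differ. It must also be quantified over all paired feasible points, which is exactly where surjectivity onto $\Fref$ is needed; without it, a reference point outside $\varphi(\Fcand)$ could beat $\varphi(y)$. I would state this reading of $\preceq_M$ explicitly at the start so that both directions become one-line applications of the biconditional.
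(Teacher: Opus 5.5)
Your proposal is correct and follows essentially the same argument as the paper: surjectivity of $\varphi$ onto $\Fref$ supplies a candidate preimage for every reference competitor, and E4 transports the optimality comparison. The only minor difference is that the paper obtains the reverse direction by the symmetric argument through $\varphi^{-1}$, while you use the backward implication of the E4 biconditional directly.
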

\begin{proof}
If $y$ is candidate-optimal, no feasible $y'$ is strictly preferred. E4
transports this ordering to the reference set, and feasible-set bijectivity
ensures every reference competitor has a candidate preimage. Therefore
$\varphi(y)$ is reference-optimal. The reverse direction follows from
$\varphi^{-1}$.
\end{proof}

E6 accepts positive certificates only for forms with an exact certified
inverse on optimizers (identity and the supported 0--1 complement involution).
The implemented schema has no certified-negative E6 type. Consequently,
unsupported or unproved cases are \unk{}, never \false{}. E3 equality alone
does not imply E6 because its section is only a right inverse and need not
represent every candidate optimizer uniquely.

\subsection{Explicit non-implications}

\begin{table}[t]
\centering
\small
\setlength{\tabcolsep}{3pt}
\begin{tabular}{@{}p{0.25\columnwidth}p{0.66\columnwidth}@{}}
\toprule
Non-implication & Counterexample sketch \\
\midrule
E2 $\not\Rightarrow$ E4 & Same feasible set $[0,1]$; one model minimizes $x$, the other maximizes $x$. \\
E4 $\not\Rightarrow$ E5 & Same feasible set and $f_C=2f_R+1$: order is preserved but raw optimum values differ. \\
E5 $\not\Rightarrow$ E6 & On $\{0,1\}$, minimize $x$ versus minimize $1-x$: both optimum values are $0$, but the optimizers differ under identity. \\
E6 $\not\Rightarrow$ E2 & Reference feasible set $[0,1]$ and candidate feasible set $\{0\}$, both minimizing $x$, share optimizer $0$ but not the full feasible set. \\
E3 is not above E2 & E2 is a same-space relation; E3 is a projection relation across dimensions. Either may be inapplicable while the other is meaningful. \\
\bottomrule
\end{tabular}
\caption{Why the profile cannot be collapsed into a deepest passing level.}
\label{tab:nonimplications}
\end{table}

\section{Certificate Construction, Independent Replay, and Algorithms}
\label{supp:implementation}

\subsection{End-to-end evaluator}

\begin{algorithm}[t]
\caption{Prerequisite-aware construction of one E0--E6 profile}
\label{alg:profile}
\begin{algorithmic}[1]
\REQUIRE cell $z$, fixed provider configuration $\theta$, cohort hash $h$
\STATE Load the response identified by fingerprint $(z,\theta)$, or record provider \absent{}.
\IF{no successful response exists}
  \STATE Set E0--E6 to \absent{}; record provider status; \RETURN.
\ENDIF
\STATE Extract at most one program; execute under the 30-s cap; ingest the exported LP/MPS exactly.
\STATE Set E0 from construction and ingestion.
\IF{E0 is not \true{}}
  \STATE Set E1--E6 to \absent{}; record the profile; \RETURN.
\ENDIF
\STATE Exhaustively propose and exactly check every map in the declared E1 family.
\IF{no admissible map is found}
  \STATE Set E1 to \unk{} and E2--E6 to \absent{}; record the profile; \RETURN.
\ENDIF
\STATE Set E1 to \true{} and run E2--E6 in order under one 120-s pair-level wall cap.
\STATE Serialize each proposed certificate or witness and independently replay it.
\STATE Emit a decided fact only if replay succeeds; otherwise emit the appropriate typed abstention.
\STATE Store checkpoints, hashes, evidence, baseline records, and the profile.
\end{algorithmic}
\end{algorithm}

Prerequisite normalization follows one rule for every profile: an unmet
prerequisite is \absent{}, whereas N/A is reserved for a relation that is
mathematically inapplicable after all prerequisites hold. This distinction is
used in Tables~\ref{tab:full-results} and~\ref{tab:typed-outcomes}.

\subsection{Deterministic E1 search and applicability}
\label{supp:e1-search}

The evaluation uses no LLM map proposer. The implementation exhaustively runs
its deterministic proposer list, de-duplicates identical maps, and checks each
proposal structurally before it can support any downstream claim. The declared
family evaluated for every E1 profile is
\begin{quote}
\ttfamily\small
stable-name-match; original-name-match; identity;\newline
signature-permutation; binary-complement; affine-lift.
\end{quote}
Name matching yields either a same-space coordinate map or, when the candidate
contains additional variables, a projection map. Identity and signature
permutation require equal dimension. Binary complement is proposed only for
pure binary same-space pairs and is then rechecked exactly. The affine-lift
proposal records projected reference coordinates; E3 separately verifies the
projection/section obligations.

E1 is \true{} if at least one proposal is admissible and \unk{} if the complete
declared family yields none. It is never a certified negative. Once E1 is
\true{}, E2 is applicable only to admissible same-space maps. Seven formal cells
(two GPT-5.4 and five Claude Sonnet 4.6) contain an admissible affine-lift map but
no admissible same-space map; their E2 state is therefore N/A rather than
\unk{}. E3 then records whether the supported affine-lift schema can be
certified.

\begin{algorithm}[t]
\caption{E2 relation under the complete declared same-space map universe}
\label{alg:e2}
\begin{algorithmic}[1]
\REQUIRE exact models $(M_C,M_R)$ and admissible E1 maps $\Phi_{\mathrm{adm}}$
\STATE $\Phi_s\leftarrow\{\varphi\in\Phi_{\mathrm{adm}}:\varphi\text{ is same-space}\}$
\IF{$\Phi_s=\varnothing$}
  \STATE \RETURN N/A.
\ENDIF
\FOR{each $\varphi\in\Phi_s$}
  \STATE Normalize $\varphi(F_C)$ and $F_R$ in a common exact coordinate system.
  \STATE Certify $\varphi(F_C)\subseteq F_R$, or verify a separating point.
  \STATE Certify $F_R\subseteq\varphi(F_C)$, or verify a separating point.
  \STATE Store the per-map relation implied by the two verified directions.
\ENDFOR
\IF{some map is certified equal}
  \STATE \RETURN \true{} with that map and both directional certificates.
\ELSIF{every map has a verified non-equality witness}
  \STATE \RETURN \fwithin{} with the complete map manifest and witnesses.
\ELSIF{the pair-level cap was reached}
  \STATE \RETURN \ur{}.
\ELSE
  \STATE \RETURN \unk{}.
\ENDIF
\end{algorithmic}
\end{algorithm}

\subsection{Independent replay and evidence schema}

\begin{algorithm}[t]
\caption{Independent replay of a proposed relation fact}
\label{alg:replay}
\begin{algorithmic}[1]
\REQUIRE normalized model hashes, named map, relation claim, serialized evidence
\STATE Re-load the exact models and verify all source hashes.
\STATE Re-check map admissibility without consulting constructor booleans.
\STATE Dispatch on evidence type: Farkas multipliers, separating point,
 affine projection/section, objective identity or ranking witness, primal/dual
 optimum certificate, or optimizer bijection.
\STATE Recompute rational identities, domain membership, feasibility, strict
 violations, and right-inverse identities as applicable.
\IF{every obligation holds exactly}
  \STATE \RETURN verified.
\ELSE
  \STATE \RETURN rejected; the proposed decided fact is removed.
\ENDIF
\end{algorithmic}
\end{algorithm}

The replay verifier reads neither natural-language prompts nor solver status
strings. Each evidence record contains the cell and cohort identifiers, exact
model hashes, map identifier, relation and directional obligation, rational
multipliers/points/matrices/vectors, constructor status, verifier status,
checkpoint timing, and source hashes. Reported re-verification denominators
count dimension-verdict artifacts rather than unique cells; one cell can
contribute multiple E4--E6 artifacts.

\subsection{Exactness, support envelope, and timeout semantics}

Gurobi is used for candidate construction/export, LP/MPS parsing support, and
the external value and structural baselines. It is not the E1--E6 certifier.
Exact acceptance uses rational arithmetic in the certifying tradition of
\citet{applegate2007exact,cook2013hybrid}. Continuous linear containment uses
Farkas certificates or exact separating witnesses; bounded pure-binary cases
use complete enumeration where declared. Unsupported general-integer
containment, unimplemented lift schemas, and unavailable optimizer-set negatives
produce typed abstentions rather than guessed booleans.

The 120-s limit is one hard wall-clock cap for the complete E1--E6 pair
evaluator, not a fresh 120 s for every dimension. The evaluator records a
checkpoint before each dimension. If the cap is reached, E0 and all completed
dimensions keep their recorded outcomes; the in-progress dimension and every
unreached successor are set to \ur{}. Consequently, the cap-hit statistic is a
union over cells, whereas the per-dimension resource counts are cumulative.
Provider responses are identified by request fingerprints.

\FloatBarrier
\section{Cohort, Models, Prompt Contract, and Statistical Protocol}
\label{supp:protocol}

\subsection{Cohort construction and eligibility}

\begin{table}[t]
\centering
\small
\begin{tabular}{@{}lrr@{}}
\toprule
Partition & Base problems & Cells/model \\
\midrule
Pilot subset & 20 & 40 \\
Formal primary cohort & 173 & 346 \\
Unsupported reference set & 4 & 8 \\
Malformed/missing & 0 & 0 \\
\midrule
Total & 197 & 394 \\
\bottomrule
\end{tabular}
\caption{Disjoint and exhaustive Bench4Opt paired-base accounting. Only the
173-base formal cohort contributes to primary results.}
\label{tab:cohort}
\end{table}

Bench4Opt contains 394 Structured/Unstructured records, grouped into 197 paired
base problems \citep{wang2025orgeval}. Eligibility is determined from reference
artifacts only. The 20-base pilot subset is defined by sorting canonical base
indices and taking the first 20 reference pairs inside the declared linear
envelope, skipping unsupported base 0016; this gives bases 0000--0015 and
0017--0020. For the remaining 177 bases, inclusion requires (i) both paired
records, (ii) exact E0-successful ingestion of the reference LP, and (iii)
membership in \{continuous LP, bounded pure binary, general-integer MILP\}.
Candidate outputs play no role in selection. The formal stratum counts are 112,
29, and 32, respectively.

\begin{table}[t]
\centering
\small
\setlength{\tabcolsep}{3pt}
\begin{tabular}{@{}ll@{}}
\toprule
Excluded base ID & Ground-truth-only reason \\
\midrule
\texttt{B4O\_BASE\_0016} & quadratic objective \\
\texttt{B4O\_BASE\_0028} & quadratic constraint \\
\texttt{B4O\_BASE\_0182} & quadratic objective \\
\texttt{B4O\_BASE\_0187} & quadratic constraint \\
\bottomrule
\end{tabular}
\caption{Unsupported reference instances. No paired record is malformed or
missing.}
\label{tab:unsupported-ids}
\end{table}

The formal cohort hash is
\begin{center}
\ttfamily\small
 e9ad20181d447143a3b53b377426af88d\
 395753cfd6f733eac0872373376b31c
\end{center}
The accompanying reproducibility archive includes the 197-base eligibility record, the pilot
list, the 173-base manifest, and the four exclusion reasons.

\subsection{Model aliases and generation protocol}

\begin{table}[t]
\centering
\small
\setlength{\tabcolsep}{3pt}
\begin{tabular}{@{}p{0.30\columnwidth}p{0.61\columnwidth}@{}}
\toprule
Field & Setting \\
\midrule
Provider/API & AutoDL; OpenAI-compatible chat completions \\
Model aliases & \texttt{gpt-5.4}; \texttt{claude-sonnet-4-6}; \texttt{Qwen3.5-397B-A17B} \\
Conditions & Structured (artifact key \texttt{full}); Unstructured (\texttt{concise}) \\
Samples & one generation per model/condition cell \\
Temperature / seed & $0.0$ / 7 \\
Maximum output & 4096 tokens \\
Repair/resampling & disabled \\
Output contract & one self-contained program exporting \texttt{candidate.lp} or MPS \\
Request timeout/retries & 120 s; at most two retries \\
Candidate execution & 30-s hard cap \\
Pair evaluator / ORGEval & 120-s hard cap / 120-s hard cap \\
Fallback / substitution & disabled \\
\bottomrule
\end{tabular}
\caption{Generation and evaluation protocol. No model-specific prompt
adaptation is used.}
\label{tab:protocol}
\end{table}

The exact template, model configuration files, request fingerprints, and
credential-free provider-status records are included in the reproducibility archive.
Provider aliases are reported exactly as used by the evaluation client. The
provider did not expose immutable served checkpoint hashes, so model aliases
and request fingerprints are the strongest available identifiers.

\subsection{Software environment}

The local evaluator and the ORGEval-style baseline implementation used in our
harness run under Python 3.9.7 on Windows 11, build 26100, with Gurobi 10.0.2,
NetworkX 2.6.3, and NumPy 1.26.4. Exact verification additionally uses Python's
\texttt{fractions.Fraction}. Provider-side hardware is not observable. CPU and
RAM identifiers were not recorded in the profile metadata; the archive provides
a replay environment specification and a system-information collector.

\subsection{Provider-aware and prerequisite-aware accounting}

Provider delivery status is reported separately from model semantics. A
terminal provider/API failure before a response yields E0--E6 \absent{}. All
346 GPT-5.4 cells have successful responses. Claude Sonnet 4.6 has 337
successful responses and nine terminal read-timeout errors; among successful
responses, 156 are E0 \true{} and 181 are E0 \false{}. Qwen has 329 successful
responses and 17 terminal provider errors; among successful responses, 196 are
E0 \true{}, 131 are E0 \false{}, and two are \uu{}.

Among the 190 Claude Sonnet 4.6 cells without an E0-\true{} candidate, 181
are candidate-level E0 \false{} outcomes and nine are provider absences. All
E1--E6 denominators condition on the 156 E0-\true{} cells.

\subsection{Statistical unit}

Profile counts use the cell as unit; Structured--Unstructured comparisons use
the 173 paired base problems. The paired analysis uses five binary end-to-end
indicators: E0 success, E1 map found, E2 decided, E5 decided, and E6 decided.
Table~\ref{tab:paired-all} reports exact two-sided McNemar tests and 10,000
base-level paired bootstrap resamples with seed 13. Holm correction covers all
15 model-by-indicator tests. These analyses describe the evaluated cohort and
do not establish universal prompt effects.

\FloatBarrier
\section{Complete Formal Results and Baseline Outputs}
\label{supp:results}

\subsection{Stage-wise outcome accounting}

\begin{table}[H]
\centering
\scriptsize
\setlength{\tabcolsep}{1.4pt}
\renewcommand{\arraystretch}{1.02}
\begin{tabular}{@{}p{0.57\columnwidth}rrr@{}}
\toprule
Metric & GPT-5.4 & Sonnet & Qwen \\
\midrule
Cells & 346 & 346 & 346 \\
Successful API responses & 346 & 337 & 329 \\
Provider/API error; E0 \absent{} & 0 & 9 & 17 \\
E0 \true{} & 334 & 156 & 196 \\
Candidate-level E0 \false{} & 12 & 181 & 131 \\
E0 \uu{} & 0 & 0 & 2 \\
E1 map found / E0 \true{} & 277/334 & 130/156 & 164/196 \\
E1 \unk{} & 57 & 26 & 32 \\
E2 decided / E1 \true{} & 114/277 & 45/130 & 87/164 \\
E2 \true{} / \fwithin{} & 108/6 & 41/4 & 84/3 \\
E2 \unk{} / N/A & 116/2 & 56/5 & 50/0 \\
E2 \ur{} & 45 & 24 & 27 \\
E3 decided & 0 & 0 & 0 \\
E4 decided & 139 & 40 & 91 \\
E5 \true{} / \false{} & 107/42 & 36/31 & 79/22 \\
E6 \true{} & 70 & 16 & 58 \\
Evidence artifacts independently verified & 358/358 & 123/123 & 250/250 \\
Cells hitting at least one 120-s cap & 70 & 35 & 39 \\
Worker termination/corrupt profiles & 0 & 0 & 0 \\
\bottomrule
\end{tabular}
\caption{Stage-wise summary after provider and prerequisite normalization.
The model columns are GPT-5.4, Claude Sonnet 4.6, and Qwen3.5-397B-A17B.
Conditional denominators are explicit; complements are not automatically
semantic errors.}
\label{tab:full-results}
\end{table}

% --- begin inlined analysis_support/final_reporting/typed_outcomes_table.tex ---
\begin{table*}[t]
\centering
\scriptsize
\setlength{\tabcolsep}{3.2pt}
\renewcommand{\arraystretch}{1.06}
\begin{tabular}{@{}llrrrrrrrr@{}}
\toprule
Model & Dim. & T & F & F$_{\cal U}$ & U & U$_r$ & U$_u$ & N/A & A \\
\midrule
GPT-5.4 & E0 & 334 & 12 & 0 & 0 & 0 & 0 & 0 & 0 \\
 & E1 & 277 & 0 & 0 & 57 & 0 & 0 & 0 & 12 \\
 & E2 & 108 & 0 & 6 & 116 & 45 & 0 & 2 & 69 \\
 & E3 & 0 & 0 & 0 & 2 & 45 & 0 & 230 & 69 \\
 & E4 & 138 & 1 & 0 & 74 & 64 & 0 & 0 & 69 \\
 & E5 & 107 & 42 & 0 & 59 & 69 & 0 & 0 & 69 \\
 & E6 & 70 & 0 & 0 & 137 & 70 & 0 & 0 & 69 \\
\midrule
Claude Sonnet 4.6 & E0 & 156 & 181 & 0 & 0 & 0 & 0 & 0 & 9 \\
 & E1 & 130 & 0 & 0 & 26 & 0 & 0 & 0 & 190 \\
 & E2 & 41 & 0 & 4 & 56 & 24 & 0 & 5 & 216 \\
 & E3 & 0 & 0 & 0 & 5 & 24 & 0 & 101 & 216 \\
 & E4 & 40 & 0 & 0 & 57 & 33 & 0 & 0 & 216 \\
 & E5 & 36 & 31 & 0 & 28 & 35 & 0 & 0 & 216 \\
 & E6 & 16 & 0 & 0 & 79 & 35 & 0 & 0 & 216 \\
\midrule
Qwen3.5-397B-A17B & E0 & 196 & 131 & 0 & 0 & 0 & 2 & 0 & 17 \\
 & E1 & 164 & 0 & 0 & 32 & 0 & 0 & 0 & 150 \\
 & E2 & 84 & 0 & 3 & 50 & 27 & 0 & 0 & 182 \\
 & E3 & 0 & 0 & 0 & 0 & 27 & 0 & 137 & 182 \\
 & E4 & 91 & 0 & 0 & 40 & 33 & 0 & 0 & 182 \\
 & E5 & 79 & 22 & 0 & 25 & 38 & 0 & 0 & 182 \\
 & E6 & 58 & 0 & 0 & 67 & 39 & 0 & 0 & 182 \\
\bottomrule
\end{tabular}
\caption{Complete prerequisite-normalized typed outcome matrix over all 346 cells per model. $F_{\cal U}$ denotes \textsc{false\_within\_declared\_universe}; $U_r$, $U_u$, and A denote \textsc{unknown\_resource}, \textsc{unknown\_unsupported}, and \textsc{absent}. Every row sums to 346.}
\label{tab:typed-outcomes}
\end{table*}
% --- end inlined analysis_support/final_reporting/typed_outcomes_table.tex ---

\subsection{Complete typed outcome matrix}

Table~\ref{tab:typed-outcomes} is exhaustive: every dimension row totals 346.
For Qwen, E4--E6 are \absent{} in 182 cells because E0 or E1 is not established.
The corresponding E4/E5/E6 \unk{} counts are 40, 25, and 67, and the \ur{}
counts are 33, 38, and 39.

\subsection{External baseline outputs}

\begin{table}[H]
\centering
\scriptsize
\setlength{\tabcolsep}{2.2pt}
\begin{tabular}{@{}p{0.57\columnwidth}rrr@{}}
\toprule
Baseline outcome & GPT-5.4 & Sonnet 4.6 & Qwen3.5 \\
\midrule
Value match: equal & 169 & 54 & 108 \\
Value match: not equal & 76 & 62 & 51 \\
Value match: abstain & 89 & 40 & 37 \\
Value match: absent (no E0-true candidate) & 12 & 190 & 150 \\
ORGEval: equivalent & 168 & 84 & 110 \\
ORGEval: not equivalent & 163 & 71 & 84 \\
ORGEval: unknown & 3 & 1 & 2 \\
ORGEval: absent (no E0-true candidate) & 12 & 190 & 150 \\
\midrule
Execution-success overestimation & 49 & 35 & 25 \\
ORGEval rejection despite E2 equality & 25 & 8 & 18 \\
Strict value-match false acceptance & 0 & 0 & 0 \\
\bottomrule
\end{tabular}
\caption{Baseline outputs and relation-specific certified discrepancies.
Baseline ``absent'' combines every cell without an E0-true candidate, including
provider absence, because no baseline pair exists.}
\label{tab:baseline-full}
\end{table}

An ORGEval rejection despite E2 equality means that the harnessed ORGEval
implementation returns \texttt{not\_equivalent} while E2 certifies mapped
feasible-set equality under a verified map. This is an E2-specific discrepancy,
not a claim of complete profile equality. A strict value-match false acceptance
requires value equality together with an E2-certified strict/incomparable
feasible-set relation; none is observed.

\FloatBarrier
\section{Paired Conditions, Case Study, and Failure Taxonomy}
\label{supp:additional-analysis}

\subsection{Structured versus Unstructured paired statistics}

% --- begin inlined analysis_support/final_reporting/paired_all_models_table.tex ---
\begin{table*}[t]
\centering
\scriptsize
\setlength{\tabcolsep}{3.4pt}
\begin{tabular}{@{}llrrrrrrr@{}}
\toprule
Model & Indicator & S & U & S-only & U-only & $\Delta$ pp [95\% CI] & $p$ & Holm $p$ \\
\midrule
GPT-5.4 & E0 success & 165 & 169 & 3 & 7 & -2.3 [-5.8, +1.2] & 0.344 & 1.000 \\
 & E1 map found & 145 & 132 & 22 & 9 & +7.5 [+1.2, +13.9] & 0.029 & 0.412 \\
 & E2 decided & 62 & 52 & 17 & 7 & +5.8 [+0.6, +11.6] & 0.064 & 0.767 \\
 & E5 decided & 77 & 72 & 12 & 7 & +2.9 [-1.7, +8.1] & 0.359 & 1.000 \\
 & E6 decided & 38 & 32 & 17 & 11 & +3.5 [-2.3, +9.2] & 0.345 & 1.000 \\
\midrule
Claude Sonnet 4.6 & E0 success & 77 & 79 & 38 & 40 & -1.2 [-11.0, +9.2] & 0.910 & 1.000 \\
 & E1 map found & 68 & 62 & 39 & 33 & +3.5 [-6.4, +13.3] & 0.556 & 1.000 \\
 & E2 decided & 29 & 16 & 21 & 8 & +7.5 [+1.7, +13.3] & 0.024 & 0.362 \\
 & E5 decided & 38 & 29 & 21 & 12 & +5.2 [-1.2, +11.6] & 0.163 & 1.000 \\
 & E6 decided & 11 & 5 & 8 & 2 & +3.5 [+0.0, +6.9] & 0.109 & 1.000 \\
\midrule
Qwen3.5-397B-A17B & E0 success & 103 & 93 & 34 & 24 & +5.8 [-2.9, +14.5] & 0.237 & 1.000 \\
 & E1 map found & 89 & 75 & 29 & 15 & +8.1 [+0.6, +15.6] & 0.049 & 0.634 \\
 & E2 decided & 47 & 40 & 17 & 10 & +4.0 [-1.7, +9.8] & 0.248 & 1.000 \\
 & E5 decided & 54 & 47 & 16 & 9 & +4.0 [-1.7, +9.8] & 0.230 & 1.000 \\
 & E6 decided & 34 & 24 & 17 & 7 & +5.8 [+0.6, +11.6] & 0.064 & 0.767 \\
\bottomrule
\end{tabular}
\caption{Base-paired Structured (S) versus Unstructured (U) descriptive analysis for all three models. Exact two-sided McNemar tests use discordant pairs; confidence intervals use 10,000 paired base-level bootstrap resamples with seed 13; Holm correction covers the 15 model-by-indicator tests.}
\label{tab:paired-all}
\end{table*}
% --- end inlined analysis_support/final_reporting/paired_all_models_table.tex ---

No row remains significant after the 15-test Holm correction. The smallest raw
values are Claude Sonnet 4.6 E2 ($p=0.0241$), GPT-5.4 E1 ($p=0.0294$), and Qwen
E1 ($p=0.0488$); their adjusted values are 0.362, 0.412, and 0.634. The table
therefore supports only descriptive coverage patterns.

\subsection{Replayable certified case}

For \texttt{B4O\_BASE\_0031} under the Unstructured condition, the Claude
Sonnet 4.6 candidate and reference each contain five binary variables and 18
inequalities. E1 verifies the identity map. Complete enumeration over all
$2^5=32$ assignments independently verifies both E2 containments. The profile
stores E1/E2 replay status in the map-level containment evidence, where both
directional checks are independently verified. E4, E5, and E6 also
verify positive; E3 is N/A because no auxiliary variables exist.
ORGEval nevertheless returns \texttt{not\_equivalent}, and the value baseline
returns \texttt{not\_equal}, illustrating why baseline labels must be tied to
their precise relation. This is the case visualized in Main Figure~2.

\subsection{Failure taxonomy}

Analysis keeps the following categories disjoint: terminal provider/API error;
successful response without an extractable program; execution/export failure;
unsupported serialized model; no admissible E1 map in the complete declared
family; certified semantic negative; unresolved semantic search; resource cap;
mathematically inapplicable relation; and downstream absence after a failed
prerequisite. This is why E0 failure, E1 coverage, semantic disagreement, and
resource abstention are not collapsed into one error rate.

\FloatBarrier
\section{Reproduction Procedure and Artifact Contract}
\label{supp:reproduction}

\subsection{Reproducibility archive contents}

The reproducibility code/data archive is organized as follows:
\begin{itemize}
\item \texttt{code/certequivbench/}: ingestion, mapping, relation construction,
independent verification, baselines, cohort eligibility, and reporting code;
\item \texttt{artifacts/cohort/}: the 197-base eligibility record, 20 pilot IDs,
173-base manifest, and four unsupported records;
\item \texttt{artifacts/profiles/}: all 1,038 per-cell profiles;
\item \texttt{artifacts/candidates/}: candidate programs where available;
normalized candidate models and their hashes are retained in the per-cell
profiles; exported LP/MPS files were not retained;
\item \texttt{artifacts/baselines/}: per-cell value and ORGEval records;
\item \texttt{artifacts/provider\_status/}: credential-free call status,
model alias, fingerprint, retry, error category, and usage metadata;
\item \texttt{analysis/}: normalization, table-generation, paired-analysis, and
selected-case scripts plus their deterministic outputs;
\item \texttt{environment/}: replay requirements, the recorded provenance, and
a system-information collector.
\end{itemize}
Raw API keys, author identities, absolute personal paths, and provider response
text are excluded. The third-party Bench4Opt corpus is not duplicated; the
manifest supplies stable IDs and hashes, and the README explains how to place an
authorized local copy for full re-ingestion.

\subsection{Replay sequence and invariants}

A reproducer first verifies the archive manifest and cohort hash, installs the
specified environment, and places Bench4Opt at the documented local path. The
replay then (i) executes the reference-support eligibility check, (ii) ingests
the recorded candidate models without provider access, (iii) independently
verifies all certificates and witnesses, (iv) rebuilds normalized profile and
baseline summaries, (v) recomputes the 10,000-resample paired analysis with seed
13, and (vi) compiles the appendix and checklist. The process fails closed on a cohort
hash, cell identity, model hash, evidence verdict, conditional denominator,
provider-category partition, or cap-accounting mismatch.

\subsection{Claim-to-evidence index}

\begin{itemize}
\setlength{\itemsep}{1pt}
\setlength{\parsep}{0pt}
\setlength{\topsep}{2pt}
\item \textbf{Profile is not a hierarchy.} Main Secs.~1, 3.1, and 3.4; Supp.~A and E. Replay the non-implication examples and prerequisite DAG.
\item \textbf{E2 equality is sound.} Main Sec.~3.3; Supp.~C. Check two directional Farkas certificates or complete finite enumeration.
\item \textbf{E2 aggregate negatives are scoped.} Main Sec.~3.3; Supp.~C.3. Check the complete declared map manifest and a verified witness for every map.
\item \textbf{E3 affine-lift equality is sound.} Main Sec.~3.3; Supp.~D. Check projection/section matrices, two containments, and the exact right-inverse identity.
\item \textbf{E4--E6 are distinct.} Main Sec.~3.4; Supp.~E. Check objective identities or ranking witnesses, optimum certificates, and optimizer bijections.
\item \textbf{Decided facts are independently checked.} Main Secs.~4.1--4.2; Supp.~F. Replay serialized evidence with the independent verifier.
\item \textbf{Three-model counts share one cohort.} Main Secs.~5.1--5.2; Supp.~G--H. Check the cohort manifest/hash and per-cell profile matrix.
\item \textbf{Baseline discrepancies are relation-specific.} Main Sec.~5.3; Supp.~H--I. Join each baseline record to its E2/E4--E6 profile by cell ID.
\item \textbf{Provider failures are not E0 failures.} Main Secs.~5.1--5.2; Supp.~G.4 and H. Check provider-status records and the normalized typed-outcome matrix.
\item \textbf{Paired prompt results are descriptive.} Main Sec.~5.4; Supp.~G.5 and I.1. Recompute the paired matrix, seed-13 bootstrap, McNemar tests, and Holm correction.
\end{itemize}

\section{Interpretive Clarifications}
\label{supp:clarifications}

\paragraph{Is \system{} a new dataset?}
No. It is an evaluator and certifying artifact system; Bench4Opt supplies the
experimental instances.

\paragraph{Do E0--E6 define an equivalence ladder?}
No. E0 and E1 are prerequisites/alignments; E2 and E3 compare different spaces;
E4--E6 ask distinct objective and solution questions.

\paragraph{Does E1 \unk{} mean non-equivalence?}
No. It means the complete finite declared family yielded no admissible map.

\paragraph{When is a state N/A rather than \absent{}?}
N/A requires satisfied prerequisites and mathematical inapplicability. For
example, E2 is N/A when E1 found only an affine-lift map, and E3 is N/A when a
same-space map leaves no auxiliary variables. If E0 or E1 fails, downstream
relations are \absent{}.

\paragraph{Why is E3 decided on zero formal cells?}
No formal cell both required and satisfied the one implemented affine-lift
certificate schema. This is a coverage result, not a counterexample to the E3
theorem.

\paragraph{Why can E5 be positive while a value baseline says not equal?}
The baseline may compare a solver-derived or relaxed scalar, whereas E5 is
accepted only from exact relation-specific evidence for the declared domains.

\paragraph{Why can ORGEval reject an E2-equal pair?}
A structural graph comparison can distinguish alternative encodings even when
exact mapped feasible sets coincide. The discrepancy is relation-specific.

\paragraph{Why only one generation per cell?}
The protocol evaluates one output per cell and excludes repair or best-of-$k$
selection. It therefore does not estimate within-model sampling variance.

\section{Limitations, Machine-Readable Contract, and Responsible Use}
\label{supp:limitations}

The guarantees are conditional on exact ingestion, the named admissible map,
the supported certificate envelope, and the correctness of the independent
verifier. They do not prove that the reference model captures the source
natural-language intent. The mapping family is finite, E3 implements one lift
schema, general-integer containment remains incomplete, and E6 has no certified
negative type. Exact procedures may exhaust the pair-level resource cap. The
empirical study uses three provider aliases, one fixed prompt, and one sample
per condition; it is diagnostic rather than a universal ranking.

\subsection{Machine-readable profile contract}

A portable profile record exposes at least
\begin{quote}
\ttfamily\scriptsize
cell\_id, base\_id, model\_alias, condition, cohort\_hash;\newline
provider\_status, request\_fingerprint, candidate\_status, e0\_status;\newline
e1\_status, declared\_map\_family, searched\_maps[], admissible\_maps[];\newline
e2..e6: status, relation, map\_id, reason, evidence\_id, verifier\_status;\newline
elapsed\_seconds, pair\_cap, checkpoint, baselines, source\_hashes,
environment\_fingerprint.
\end{quote}
Provider status is separate from E0. Each downstream entry carries its map and
prerequisite context. Replay status is recorded either at the relation level or
in the referenced map-level evidence object; the verifier resolves the
referenced evidence before assigning the relation outcome.

\subsection{Threats to validity}

\paragraph{Construct validity.}
The reference is treated as ground truth. \system{} measures candidate-to-reference
semantics, not whether either model faithfully captures the source problem.

\paragraph{Internal validity.}
Separate construction and replay paths, exact arithmetic, hashes, and
fail-closed routing reduce but cannot eliminate implementation defects. The
provider-aware normalization is deterministic and preserves the source
records.

\paragraph{External validity.}
Conclusions are limited to the supported Bench4Opt cohort, the three served
aliases, the fixed prompt/output contract, and the recorded software envelope.

\paragraph{Statistical conclusion validity.}
Primary tables are exact finite-cohort counts. Paired tests characterize the
173 bases and are not estimates of an open-ended model population.

\subsection{Responsible use}

The evaluator may expose generated code and optimization artifacts. Any shared
archive must remove credentials, identities, absolute personal paths, and
non-redistributable provider content. Certification against a reference model
must not be presented as validation of a real-world deployment without separate
checks of reference correctness, data provenance, operational constraints, and
domain consequences.

\end{document}